\newcommand{\cmark}{\ding{51}}%
\newcommand{\xmark}{\ding{55}}%
\DeclareMathAlphabet{\pazocal}{OMS}{zplm}{m}{n}
\newtheorem{theorem}{Theorem}
\newtheorem{lemma}{Lemma}
\newtheorem{remark}{Remark}
\providecommand{\nor}[1]{\left\lVert {#1} \right\rVert}
\providecommand{\abs}[1]{\lvert{#1}\rvert}
\providecommand{\scalT}[2]{\left\langle{#1},{#2}\right\rangle}
\def\bit{\begin{itemize}}
\def\eit{\end{itemize}}
\def\it{\item}
\def\ben{\begin{enumerate}}
\def\een{\end{enumerate}}
\def \P{\mathbb{P}}
\def \Q{\mathbb{Q}}
\definecolor{ao(english)}{rgb}{0.0, 0.5, 0.0}
\definecolor{dkgreen}{rgb}{0,0.6,0}
\definecolor{gray}{rgb}{0.5,0.5,0.5}
\definecolor{mauve}{rgb}{0.58,0,0.82}
\tiny\color{gray},
\newif\ifarxiv 
\title{Fisher GAN}
\author{
Youssef Mroueh$^*$, Tom Sercu$^*$ \\
    \texttt{mroueh@us.ibm.com}, 
   \texttt{tom.sercu1@ibm.com} \\
   $*$ Equal Contribution\\
   AI Foundations, IBM Research AI  \\
   IBM T.J Watson Research Center \\
}
\begin{document}

\maketitle

\begin{abstract}
Generative Adversarial Networks (GANs) are powerful models for learning complex distributions.
Stable training of GANs has been addressed in many recent works which explore different metrics between distributions. In this paper we introduce \emph{Fisher GAN} which fits within the Integral Probability Metrics (IPM) framework for  training GANs.
Fisher GAN defines a critic with a data dependent constraint on its \emph{second order moments}.
We show in this paper that Fisher GAN allows for stable and time efficient  training that does not compromise the capacity of the critic, and does not need data independent constraints such as weight clipping.
We analyze our Fisher IPM theoretically and provide an algorithm based on Augmented Lagrangian for Fisher GAN.
We validate our claims on both image sample generation and semi-supervised classification using Fisher GAN.

\end{abstract}

\section{Introduction}

Generative Adversarial Networks (GANs) \cite{goodfellow2014generative} have recently become a prominent method to learn high-dimensional probability distributions.
The basic framework consists of a generator neural network which learns to generate samples which approximate the distribution, while the discriminator measures the distance between  the real data distribution, and this learned distribution that is referred to as \emph{fake} distribution.
The generator uses the gradients from the discriminator to minimize the distance with the real data distribution.
The distance between these distributions  was the object of study in \cite{arjovsky2017towards}, and highlighted the impact of the distance choice on the stability of the optimization.
The original GAN formulation optimizes the Jensen-Shannon divergence,
while later work generalized this to optimize 
f-divergences \cite{nowozin2016f}, KL \cite{kaae2016amortised}, 
the Least Squares objective \cite{mao2016least}. 
Closely related to our work, Wasserstein GAN (WGAN) \cite{arjovsky2017wasserstein} uses the earth mover distance, for which the discriminator function class needs to be constrained to be Lipschitz. To impose this Lipschitz constraint, WGAN proposes to use \emph{weight clipping}, i.e. a data independent constraint, but this comes at the cost of reducing the capacity of the critic and high sensitivity to the choice of the clipping hyper-parameter.
A recent development Improved Wasserstein GAN (WGAN-GP) \cite{gulrajani2017improved} introduced a data dependent constraint namely a  \emph{gradient penalty} to enforce the Lipschitz constraint  on the critic, which does not compromise the capacity of the critic but comes at a high computational cost.

We build in this work on the Integral probability Metrics (IPM) framework for learning GAN of \cite{mroueh2017mcgan}. Intuitively the IPM defines a \emph{critic} function $f$, that maximally discriminates between the real and fake distributions.
We propose a theoretically sound and time efficient data dependent constraint on the critic of Wasserstein GAN, that allows a stable training of GAN and does not compromise the capacity of the critic. 
Where WGAN-GP uses a penalty on the gradients of the critic, \emph{Fisher GAN} imposes a constraint on the \emph{second order moments} of the critic.
This extension to the IPM framework is inspired by the \emph{Fisher Discriminant Analysis} method.

The main contributions of our paper are:
\begin{enumerate}[wide,labelwidth=!,labelindent=0pt,topsep=0pt]
\item We introduce in Section \ref{sec:fisherIPM} the Fisher IPM, a scaling invariant distance between distributions.
  Fisher IPM introduces a data dependent constraint on the second order moments of the critic that discriminates between the two distributions.
  Such a constraint ensures the boundedness of the metric and the critic.
  We show in Section \ref{sec:whiteFisher} that Fisher IPM when approximated with neural networks, corresponds to a discrepancy between \emph{whitened} mean feature embeddings of the distributions.
  In other words a mean feature discrepancy that is measured with a Mahalanobis distance in the space computed by the neural network.
\item  We show  in Section \ref{sec:theory} that Fisher IPM corresponds to the \emph{Chi-squared} distance ($\chi_2$) when the critic has unlimited capacity (the critic belongs to a universal hypothesis function class).
  Moreover we prove in Theorem 2 that even when the critic is parametrized by a neural network, it approximates the $\chi_2$ distance with a factor which is a inner product between optimal and neural network critic.
  We finally derive generalization bounds of the learned critic from samples from the two distributions, assessing the statistical error and its convergence to the Chi-squared distance from finite sample size.
\item We use Fisher IPM as a GAN objective \footnote{Code is available at \url{https://github.com/tomsercu/FisherGAN}}
  and formulate an algorithm that combines desirable properties (Table \ref{tab:comparison}): 
  a stable and meaningful loss between distributions for GAN as in Wasserstein GAN \cite{arjovsky2017wasserstein}, 
  at a low computational cost similar to simple weight clipping, 
  while not compromising the  capacity  of the critic via a data dependent constraint but at a much lower computational cost than \cite{gulrajani2017improved}. 
  Fisher GAN achieves strong semi-supervised learning results without need of batch normalization in the critic.
\end{enumerate}

\begin{table}[H]
\vskip -0.2 in
\begin{center}
\caption{Comparison between Fisher GAN and recent related approaches.
  \label{tab:comparison} }
\begin{tabular}{| l | l | l | l | l | }
\hline
              & Stability & Unconstrained  &Efficient  & Representation   \\
              &  & capacity  &Computation &power (SSL)  \\
\hline
Standard GAN \cite{goodfellow2014generative,salimans2016improved} & ~~~~\textcolor{red}{\xmark}     &~~~~\textcolor{ao(english)}{\cmark}                     & ~~~~\textcolor{ao(english)}{\cmark}     & ~~~~\textcolor{ao(english)}{\cmark}     \\
WGAN, McGan  \cite{arjovsky2017wasserstein,mroueh2017mcgan}        & ~~~~\textcolor{ao(english)}{\cmark}     &~~~~\textcolor{red}{\xmark}                   &~~~~\textcolor{ao(english)}{\cmark}    &~~~~\textcolor{red}{\xmark}        \\
WGAN-GP \cite{gulrajani2017improved}     &~~~~\textcolor{ao(english)}{\cmark}      & ~~~~\textcolor{ao(english)}{\cmark}                    &~~~~\textcolor{red}{\xmark}    &~~~~?         \\
\hline
Fisher Gan (Ours) & ~~~~\textcolor{ao(english)}{\cmark} &~~~~\textcolor{ao(english)}{\cmark}                    & ~~~~\textcolor{ao(english)}{\cmark}     & ~~~~\textcolor{ao(english)}{\cmark}      \\
\hline
\end{tabular}
\end{center}
\vskip -0.2 in
\end{table}

\section{Learning GANs with Fisher IPM}\label{sec:fisherIPM}
\subsection{Fisher IPM in an arbitrary function space: General framework}
\label{sec:fisheripm}
\textbf{Integral Probability Metric (IPM).} 
Intuitively an IPM defines a critic function $f$ belonging to a function class $\mathcal{F}$, that maximally discriminates between two distributions. The function class $\mathcal{F}$ defines how $f$ is bounded, which is crucial to define the metric.
More formally, consider a compact space $\pazocal{X}$ in $\mathbb{R}^{d}$. Let $\mathcal{F}$ be a set of \emph{measurable, symmetric and bounded} real valued  functions on $\pazocal{X}$. Let $\mathcal{P}(\pazocal{X})$ be the set of measurable probability distributions on $\pazocal{X}$. Given two probability distributions $\mathbb{P},\mathbb{Q} \in \mathcal{P}({\pazocal{X}})$, the IPM indexed by a \emph{symmetric} function space $\mathcal{F}$ is defined as follows \cite{muller1997integral}:
\vskip -0.35  in
\begin{equation}
d_{\mathcal{F}}(\mathbb{P},\mathbb{Q})= \sup_{f \in \mathcal{F}}\Big\{  \underset{x\sim \mathbb{P}}{\mathbb{E}} f(x) -\underset{x\sim \mathbb{Q}}{\mathbb{E}}f(x)\Big\}.
\label{eq:IPM}
\end{equation} 
\vskip -0.1 in
It is easy to see that $d_{\mathcal{F}}$ defines a pseudo-metric over $\mathcal{P}(\pazocal{X})$. Note specifically that if $\mathcal{F}$ is not bounded, $\sup_f$ will scale $f$ to be arbitrarily large.
By choosing $\mathcal{F}$ appropriately \cite{IPMemp}, various distances between probability measures can be defined. 

\textbf{First formulation: Rayleigh Quotient.}
In order to define an IPM in the GAN context, \cite{arjovsky2017wasserstein,mroueh2017mcgan} impose the boundedness of the function space via a data independent constraint. This was achieved via restricting the norms of the weights parametrizing the function space to a $\ell_p$ ball. Imposing such a data independent constraint makes the training highly dependent on the constraint hyper-parameters  and restricts the capacity of the learned network, limiting the usability of the learned critic in a  semi-supervised learning task.
Here we take a different angle and  design the IPM to be scaling invariant as a \emph{Rayleigh quotient}. Instead of measuring the discrepancy between means as in Equation \eqref{eq:IPM}, we measure a  \emph{standardized} discrepancy, so that the distance is \textbf{bounded by construction}. Standardizing this discrepancy introduces as we will see a \emph{data dependent} constraint, that controls the growth of the weights of the critic $f$ and ensures the stability of the training while maintaining the capacity of the critic.  Given two distributions $\mathbb{P},\mathbb{Q} \in \mathcal{P}(\pazocal{X})$ the Fisher IPM for a function space $\mathcal{F}$ is defined as follows:
\begin{equation}
d_{\mathcal{F}}(\mathbb{P},\mathbb{Q})=\sup_{f \in \mathcal{F} }\frac{ \underset{x\sim \mathbb{P}}{\mathbb{E}}[ f(x)] - \underset{x\sim \mathbb{Q}}{\mathbb{E}}[f(x)]}{ \sqrt{\nicefrac{1}{2}\mathbb{E}_{x\sim \mathbb{P} }f^2(x)+\nicefrac{1}{2}\mathbb{E}_{x \sim \mathbb{Q}} f^2(x)} }.
\label{eq:FisherIPM}
\end{equation}

While a standard IPM (Equation \eqref{eq:IPM}) maximizes the discrepancy between the means of a function under two  different distributions, Fisher IPM looks for critic $f$ that achieves a tradeoff between maximizing  the discrepancy between the means under the two distributions (between \emph{class} variance), and reducing the pooled second order moment (an upper bound on the intra-class variance). 

Standardized discrepancies have a long history in statistics and the so-called \emph{two-samples} hypothesis testing. 
For example the classic two samples Student's $t-$ test defines the student statistics as the ratio between means discrepancy and the sum of standard deviations.
It is now well established that learning generative models has its roots in the two-samples hypothesis testing problem \cite{mohamed2016learning}. Non parametric two samples testing and model criticism from the kernel literature lead to the so called maximum kernel mean discrepancy (MMD) \cite{gretton2012kernel}. The MMD cost function and the mean matching IPM for a general function space has been recently used for training GAN \cite{li2015generative,dziugaite2015training,mroueh2017mcgan}. 

Interestingly Harchaoui et al \cite{harchaoui2008testing_nips} proposed Kernel Fisher Discriminant Analysis  for the two samples hypothesis testing problem, and showed its statistical consistency. The Standard Fisher discrepancy used in  Linear Discriminant Analysis (LDA) or Kernel Fisher Discriminant Analysis (KFDA) can be written:
$\sup_{f \in \mathcal{F} }\frac{ \left( \underset{x\sim \mathbb{P}}{\mathbb{E}}[ f(x)] - \underset{x\sim \mathbb{Q}}{\mathbb{E}}[f(x)]\right)^2}{ \rm{Var}_{x\sim\mathbb{P}}(f(x))+\rm{Var}_{x\sim\mathbb{Q}}(f(x)) },$
where $\rm{Var}_{x\sim\mathbb{P}}(f(x))=\mathbb{E}_{x\sim \mathbb{P}}f^2(x)-(\mathbb{E}_{x\sim \mathbb{P}}(f(x)))^2$.
Note that in LDA $\mathcal{F}$ is restricted to linear functions, in KFDA $\mathcal{F}$ is restricted to a Reproducing Kernel Hilbert Space (RKHS).
Our Fisher IPM  (Eq \eqref{eq:FisherIPM}) deviates from  the standard Fisher discrepancy  since the numerator is not squared, and we use in the denominator the second order moments instead of the variances. Moreover in our definition of Fisher IPM, $\mathcal{F}$ can be any symmetric function class.

\textbf{Second formulation: Constrained form.}
Since the distance is scaling invariant, $d_{\mathcal{F}}$ can be written equivalently in the following constrained form:
\begin{equation}
d_{\mathcal{F}}(\mathbb{P},\mathbb{Q})=\sup_{f \in \mathcal{F}, \frac{1}{2}\mathbb{E}_{x\sim \mathbb{P} }f^2(x)+\frac{1}{2}\mathbb{E}_{x \sim \mathbb{Q}} f^2(x)=1 } \mathcal{E}(f):=\underset{x\sim \mathbb{P}}{\mathbb{E}}[ f(x)] - \underset{x\sim \mathbb{Q}}{\mathbb{E}}[f(x)].
\label{eq:FisherIPMConstraint}
\end{equation}

\vskip -0.10 in
\textbf{Specifying $\P,\Q$: Learning GAN with Fisher IPM.} We turn now to  the problem of learning GAN with Fisher IPM.  Given a distribution $\mathbb{P}_r\in \mathcal{P}(\pazocal{X})$, we learn a function $g_{\theta}: \pazocal{Z}\subset \mathbb{R}^{n_z}\to \pazocal{X},$
such that for $z\sim p_{z}$, the distribution of $g_{\theta}(z)$ is close to the real data distribution $\mathbb{P}_{r}$, where $p_{z}$ is a fixed distribution on $\pazocal{Z}$ (for instance $z\sim \mathcal{N}(0,I_{n_z})$). 
Let $\mathbb{P}_{\theta}$ be the distribution of $g_{\theta}(z),z\sim p_{z}$. Using Fisher IPM (Equation \eqref{eq:FisherIPMConstraint}) indexed by a parametric function class $\mathcal{F}_{p}$,
the generator minimizes the IPM:
$\min_{g_{\theta}} d_{\mathcal{F}_{p}}(\mathbb{P}_r,\mathbb{P}_{\theta})$.
Given samples $\{x_i ,1\dots N\}$ from $\mathbb{P}_r$ and samples $\{z_i ,1\dots M\}$ from $p_{z}$ we shall solve the following empirical problem:
\vskip -0.25 in
\begin{align}
&\min_{g_{\theta}} \sup_{f_{p}\in \mathcal{F}_{p}} \hat{\mathcal{E}}(f_p,g_{\theta}):=\frac{1}{N} \sum_{i=1}^N f_{p}(x_i) - \frac{1}{M}\sum_{j=1}^M f_{p}(g_{\theta}(z_j))\text{ Subject to } \hat{\Omega}(f_p,g_{\theta})=1,
\label{eq:empFisherGAN}
\end{align}
\vskip -0.15 in
where $\hat{\Omega}(f_p,g_{\theta})= \frac{1}{2 N} \sum_{i=1}^N f^2_{p}(x_i) + \frac{1}{2 M}\sum_{j=1}^M f^2_{p}(g_{\theta}(z_j))$.
For simplicity we will have $M=N$.

\begin{figure}[t]
\centering
\includegraphics[width=0.75\linewidth]{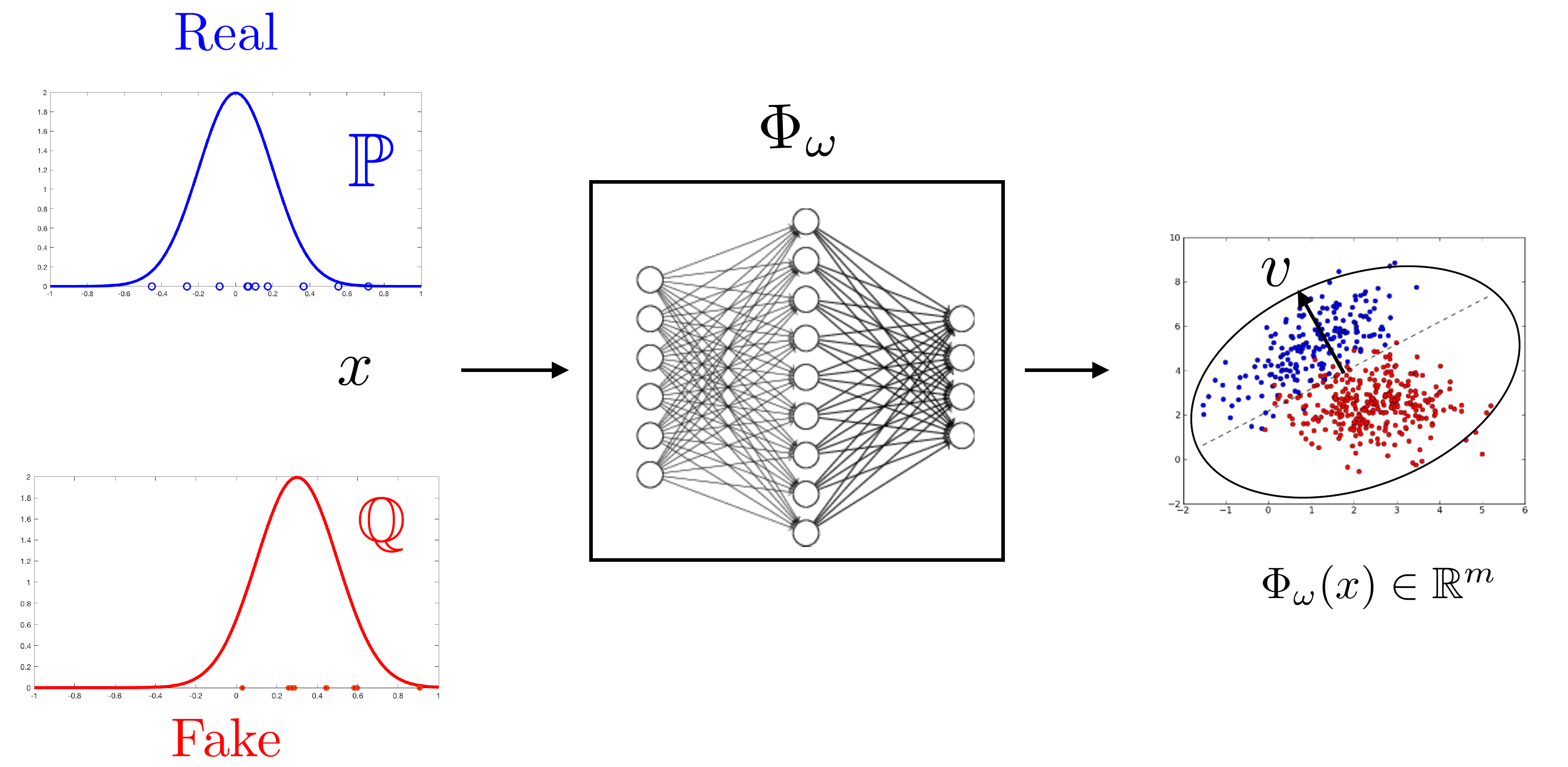}
\caption{Illustration of Fisher IPM with Neural Networks. $\Phi_\omega$ is a convolutional neural network which defines the embedding space.
$v$ is the direction in this embedding space with maximal mean separation
$\scalT{v}{\mu_{\omega}(\mathbb{P})-\mu_{\omega}(\mathbb{Q})}$,
constrained by the hyperellipsoid
$v^\top \, \Sigma_{\omega}(\mathbb{P};\mathbb{Q}) \, v = 1$.
}
\label{fig:illustration}
\vskip -0.1 in
\end{figure}

\subsection{Fisher IPM with Neural Networks}\label{sec:whiteFisher}
We will specifically study the case where $\mathcal{F}$ is a finite dimensional Hilbert space induced by a neural network $\Phi_\omega$
(see Figure \ref{fig:illustration} for an illustration).
In this case, an IPM with data-independent constraint will be equivalent to mean matching \cite{mroueh2017mcgan}.
We will now show that Fisher IPM will give rise to a \emph{whitened} mean matching interpretation, or equivalently to \emph{mean matching} with a Mahalanobis distance.

\textbf{Rayleigh Quotient.} Consider the function space $\mathcal{F}_{v,\omega}$, defined as follows
$$\mathcal{F}_{v,\omega}=\{f(x)=\scalT{v}{\Phi_{\omega}(x)} | v \in \mathbb{R}^m, \Phi_{\omega}: \pazocal{X} \to \mathbb{R}^m \},$$
$\Phi_{\omega}$ is typically parametrized with a multi-layer neural network.
We define the mean and covariance (Gramian) feature embedding of a distribution as in McGan \cite{mroueh2017mcgan}:
$$\mu_{\omega}(\mathbb{P})=\underset{x \sim \mathbb{P}}{\mathbb{E}}\left( \Phi_{\omega}(x)\right) ~~~~~\text{and}~~~~~ \Sigma_{\omega}(\mathbb{P})=\underset{x\sim \mathbb{P}}{\mathbb{E}}\left(\Phi_{\omega}(x)\Phi_{\omega}(x)^{\top}\right),$$  Fisher IPM as defined in Equation \eqref{eq:FisherIPM} on $\mathcal{F}_{v,\omega}$ can be  written as follows:
\begin{equation}
d_{\mathcal{F}_{v,\omega}}(\mathbb{P},\mathbb{Q})=\max_{\omega} \max_{v} \frac{\scalT{v}{\mu_{\omega}(\mathbb{P})-\mu_{\omega}(\mathbb{Q})}}{\sqrt{v^{\top}(\frac{1}{2}\Sigma_{\omega}(\mathbb{P})+ \frac{1}{2}\Sigma_{\omega}(\mathbb{Q})+\gamma I_m )v}},
\end{equation}
where we added a regularization term ($\gamma>0$) to avoid singularity of the covariances.
Note that if  $\Phi_{\omega}$ was implemented with homogeneous non linearities  such as RELU, if we swap $(v,\omega)$ with $(c v, c'\omega)$ for any constants $c,c'>0$, the distance $d_{\mathcal{F}_{v,\omega}}$ remains unchanged, hence the scaling invariance.

\textbf {Constrained Form.} Since the Rayleigh Quotient is not amenable to optimization, we will consider Fisher IPM as a constrained optimization problem.
By virtue of the scaling invariance and the constrained form of the Fisher IPM given in  Equation \eqref{eq:FisherIPMConstraint}, $d_{\mathcal{F}_{v,\omega}}$ can be written equivalently as:
\begin{equation}
d_{\mathcal{F}_{v,\omega}}(\mathbb{P},\mathbb{Q})=\max_{\omega,v,v^{\top}(\frac{1}{2}\Sigma_{\omega}(\mathbb{P})+\frac{1}{2} \Sigma_{\omega}(\mathbb{Q})+\gamma I_m )v=1} \scalT{v}{\mu_{\omega}(\mathbb{P})-\mu_{\omega}(\mathbb{Q})}
\end{equation}
Define the pooled covariance: $\Sigma_{\omega}(\mathbb{P};\mathbb{Q})=\frac{1}{2}\Sigma_{\omega}(\mathbb{P})+\frac{1}{2} \Sigma_{\omega}(\mathbb{Q})+\gamma I_m$.
Doing a simple change of variable $u=(\Sigma_{\omega}(\mathbb{P};\mathbb{Q}) )^{\frac{1}{2}} v$ we see that:
\begin{eqnarray}
d_{\mathcal{F}_{u,\omega}}(\mathbb{P},\mathbb{Q})
&=& \max_{\omega}\max_{u,\nor{u}=1 }  \scalT{u}{(\Sigma_{\omega}(\mathbb{P};\mathbb{Q}))^{-\frac{1}{2}} \left(\mu_{\omega}(\mathbb{P})- \mu_{\omega}(\mathbb{Q})\right)}\nonumber\\
&=& \max_{\omega} \nor{(\Sigma_{\omega}(\mathbb{P};\mathbb{Q}) )^{-\frac{1}{2}} \left(\mu_{\omega}(\mathbb{P})- \mu_{\omega}(\mathbb{Q})\right)},
\end{eqnarray}
hence we see that fisher IPM corresponds to the worst case distance between \emph{whitened means}.
Since the means are white, we don't need to impose further constraints on $\omega$ as in \cite{arjovsky2017wasserstein,mroueh2017mcgan}.
Another interpretation of the Fisher IPM stems from the fact that:
$$d_{\mathcal{F}_{v,\omega}}(\mathbb{P},\mathbb{Q})=\max_{\omega} \sqrt{(\mu_{\omega}(\mathbb{P})-\mu_{\omega}(\mathbb{Q}))^{\top}\Sigma_{\omega}^{-1}(\mathbb{P};\mathbb{Q})(\mu_{\omega}(\mathbb{P})-\mu_{\omega}(\mathbb{Q}))},$$
from which we see that Fisher IPM is a  Mahalanobis distance  between the mean feature embeddings of the distributions. The Mahalanobis distance is defined by the positive definite matrix $\Sigma_{w}(\mathbb{P};\mathbb{Q})$.
We show in Appendix \ref{appendix:wgangpvsfisher} that the gradient penalty in Improved Wasserstein \cite{gulrajani2017improved} gives rise to a similar  Mahalanobis mean matching interpretation.

\textbf{Learning GAN with Fisher IPM.} Hence we see that learning GAN with Fisher IPM:
$$\min_{g_{\theta}} \max_{\omega} \max_{v, v^{\top}(\frac{1}{2}\Sigma_{\omega}(\mathbb{P}_r)+\frac{1}{2}\Sigma_{\omega}(\mathbb{\mathbb{P}_{\theta}})+\gamma I_{m} )v=1} \scalT{v}{\mu_{w}(\mathbb{P}_r)-\mu_{\omega}(\mathbb{P}_{\theta})}$$
corresponds to a min-max game between a feature space and a generator. The feature space tries to maximize the Mahalanobis distance between the feature means embeddings of \emph{real} and \emph{fake} distributions. The generator tries to minimize the mean embedding distance.

\begin{figure}[t]
\vskip -0.10 in
\centering
\includegraphics[width=\linewidth]{./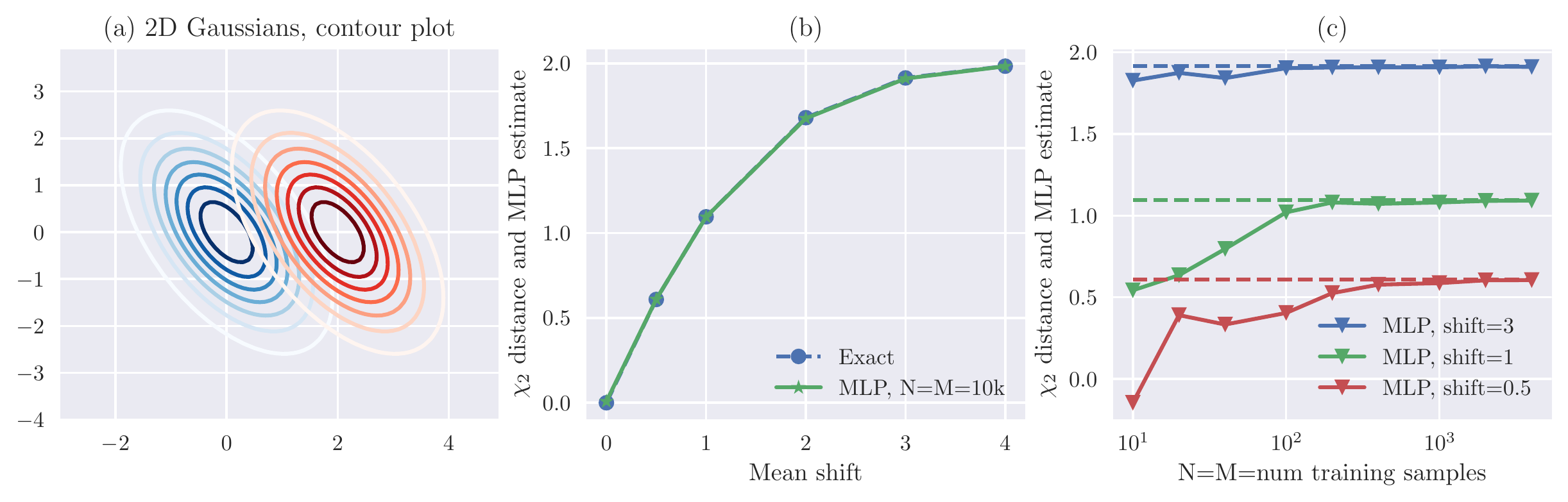}
\vskip -0.10 in
\caption{
Example on 2D synthetic data, where both $\textcolor{blue}{\mathbb{P}}$ and $\textcolor{red}{\mathbb{Q}}$ are fixed normal distributions with the same covariance and shifted means along the x-axis, see (a).
Fig (b, c) show the exact $\chi_2$ distance from numerically integrating Eq (\ref{eq:chi2}),
together with the estimate obtained from training a 5-layer MLP with layer size = 16 and LeakyReLU nonlinearity
on different training sample sizes.
The MLP is trained using Algorithm 1, where sampling from the generator is replaced by sampling from $\Q$,
and the $\chi_2$ MLP estimate is computed with Equation (\ref{eq:FisherIPM}) on a large number of samples (i.e. out of sample estimate).
We see in (b) that for large enough sample size, the MLP estimate is extremely good.
In (c) we see that for smaller sample sizes, the MLP approximation bounds the ground truth $\chi_2$ from below (see Theorem 2)
and converges to the ground truth roughly as $\mathcal{O}(\frac{1}{\sqrt{N}})$ (Theorem 3).
We notice that when the distributions have small $\chi_2$ distance, a larger training size is needed to get a better estimate - again this is in line with Theorem 3.
}
\label{fig:toydata}
\end{figure}

\vskip -0.1in
\section{Theory}\label{sec:theory}
\vskip -0.1in
We will start first by studying the Fisher IPM  defined in Equation \eqref{eq:FisherIPM} when the function space has \emph{full capacity} i.e when the critic belongs to $\mathcal{L}_2(\pazocal{X},\frac{1}{2}(\mathbb{P}+\mathbb{Q}))$ meaning that $\int_{\pazocal{X}} f^2(x)\frac{(\mathbb{P}(x)+\mathbb{Q}(x))}{2}dx <\infty$.
Theorem \ref{theo:chisquarefullcapacity} shows that under this condition, the Fisher IPM corresponds to the \emph{Chi-squared distance} between distributions, and gives a closed form expression of the \emph{optimal critic} function $f_{\chi}$  (See Appendix \ref{appendix:chisq_pearson} for its relation with the Pearson Divergence).
Proofs are given in Appendix \ref{appendix:proofs}.
\begin{theorem} [Chi-squared distance at full capacity] Consider the Fisher IPM for $\mathcal{F}$ being the space of all measurable functions endowed by $\frac{1}{2}(\mathbb{P}+\mathbb{Q})$, i.e. $\mathcal{F}:=\mathcal{L}_{2}(\pazocal{X},\frac{\mathbb{P}+\mathbb{Q}}{2})$.
Define the Chi-squared distance between two distributions:
\begin{equation}
\chi_{2}(\mathbb{P},\mathbb{Q})=\sqrt{\int_{\pazocal{X}} \frac{(\mathbb{P}(x)-\mathbb{Q}(x))^2}{\frac{\mathbb{P}(x)+\mathbb{Q}(x)}{2}} dx}
\label{eq:chi2}
\end{equation}
\vskip -0.1in
The following holds true for any $\mathbb{P},\mathbb{Q}$, $\mathbb{P}\neq \mathbb{Q}$:\\

1) The Fisher IPM for $\mathcal{F}=\mathcal{L}_{2}(\pazocal{X},\frac{\mathbb{P}+\mathbb{Q}}{2})$ is equal to the Chi-squared distance defined above:
$d_{\mathcal{F}}(\mathbb{P},\mathbb{Q})= \chi_2(\mathbb{P},\mathbb{Q}).$\\
2) The optimal critic of the Fisher IPM on $\mathcal{L}_{2}(\pazocal{X},\frac{\mathbb{P}+\mathbb{Q}}{2})$ is :
$$f_{\chi}(x)=\frac{1}{\chi_2(\mathbb{P},\mathbb{Q})} \frac{\mathbb{P}(x)-\mathbb{Q}(x)}{\frac{\mathbb{P}(x)+\mathbb{Q}(x)}{2}}.$$ 
\label{theo:chisquarefullcapacity}
\end{theorem}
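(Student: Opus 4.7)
The plan is to recast the Fisher IPM as a Rayleigh quotient in a single Hilbert space and then reduce the whole statement to the Cauchy--Schwarz inequality. Introduce the mixture measure $\mu := \tfrac{1}{2}(\mathbb{P}+\mathbb{Q})$; since $\mathbb{P},\mathbb{Q} \ll \mu$, the Radon--Nikodym derivative $r(x) := \frac{d(\mathbb{P}-\mathbb{Q})}{d\mu}(x) = \frac{\mathbb{P}(x)-\mathbb{Q}(x)}{\mu(x)}$ is well defined. The key observation I would make first is that, for any $f \in \mathcal{F} = \mathcal{L}_2(\pazocal{X},\mu)$,
\[
\underset{x\sim \mathbb{P}}{\mathbb{E}} f(x) - \underset{x\sim \mathbb{Q}}{\mathbb{E}} f(x) = \int f\, d(\mathbb{P}-\mathbb{Q}) = \int f(x) r(x)\, d\mu(x) = \langle f, r\rangle_\mu,
\]
while the denominator in \eqref{eq:FisherIPM} is exactly $\|f\|_\mu = \sqrt{\int f^2\, d\mu}$. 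So the Fisher IPM collapses to
\[
d_{\mathcal{F}}(\mathbb{P},\mathbb{Q}) = \sup_{f \in \mathcal{L}_2(\mu),\, f\neq 0} \frac{\langle f, r\rangle_\mu}{\|f\|_\mu}.
\]

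Second, I would apply Cauchy--Schwarz in $\mathcal{L}_2(\mu)$: the supremum of a Rayleigh-type ratio of an inner product against a norm equals $\|r\|_\mu$, attained (up to scale) when $f \propto r$. The third step is a direct computation:
\[
\|r\|_\mu^2 = \int \left(\frac{\mathbb{P}(x)-\mathbb{Q}(x)}{\mu(x)}\right)^2 \mu(x)\, dx = \int \frac{(\mathbb{P}(x)-\mathbb{Q}(x))^2}{\tfrac{\mathbb{P}(x)+\mathbb{Q}(x)}{2}}\, dx = \chi_2(\mathbb{P},\mathbb{Q})^2,
\]
which proves part (1). For part (2), by scale invariance the maximizer is only defined up to a positive multiplicative constant, and the specific normalization stated in the theorem corresponds to imposing the constraint $\tfrac{1}{2}\mathbb{E}_{\mathbb{P}}f^2 + \tfrac{1}{2}\mathbb{E}_{\mathbb{Q}}f^2 = \|f\|_\mu^2 = 1$ from \eqref{eq:FisherIPMConstraint}. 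Setting $f_\chi := r / \|r\|_\mu$ gives a unit-norm element of $\mathcal{L}_2(\mu)$ and recovers exactly the formula in the statement; a quick check confirms $\langle f_\chi, r\rangle_\mu = \|r\|_\mu = \chi_2(\mathbb{P},\mathbb{Q})$, so this $f_\chi$ saturates the supremum.

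There is very little that can go wrong here. The only subtlety worth a sentence is integrability: if $\chi_2(\mathbb{P},\mathbb{Q}) = \infty$ then $r \notin \mathcal{L}_2(\mu)$ and the supremum is also infinite (seen by plugging in truncations of $r$), so both sides of the identity agree in the extended sense; when $\chi_2(\mathbb{P},\mathbb{Q})$ is finite the argument above goes through verbatim and $f_\chi$ is a bona fide element of $\mathcal{F}$. The assumption $\mathbb{P}\neq \mathbb{Q}$ merely rules out the degenerate case $r \equiv 0$ so that the normalization defining $f_\chi$ makes sense. No other step poses a real obstacle; the proof is essentially ``name the right Hilbert space and invoke Cauchy--Schwarz.''
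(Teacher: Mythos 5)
Your proof is correct, but it takes a genuinely different route from the paper. You work entirely inside the Hilbert space $\mathcal{L}_2(\pazocal{X},\mu)$ with $\mu=\tfrac{1}{2}(\mathbb{P}+\mathbb{Q})$, identify the numerator as $\scalT{f}{r}_\mu$ with $r=\tfrac{d(\mathbb{P}-\mathbb{Q})}{d\mu}$ and the denominator as $\nor{f}_\mu$, and finish with Cauchy--Schwarz, which gives the value $\nor{r}_\mu=\chi_2(\mathbb{P},\mathbb{Q})$ and the maximizer $f_\chi=r/\nor{r}_\mu$ in one stroke. The paper instead passes to the constrained form, writes the Lagrangian $\pazocal{L}(f,\lambda)$, derives the pointwise KKT condition $(\mathbb{P}-\mathbb{Q})-\tfrac{\lambda_*}{2}f_\chi(\mathbb{P}+\mathbb{Q})=0$, and then uses feasibility to solve for $\lambda_*$, concluding $d_{\mathcal{F}}=\lambda_*=\chi_2$. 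Your argument is more elementary and sidesteps the duality/saddle-point and pointwise-optimality justifications the Lagrangian route leans on; it also makes uniqueness of the critic up to positive scaling transparent. What the paper's route buys is the identity ``optimal Lagrange multiplier $=$ $\chi_2$ distance,'' which dovetails with the Augmented Lagrangian algorithm and the role of $\lambda$ (and $\hat\lambda$ in Theorem 3) later in the paper. One small remark: your integrability caveat is vacuous here, since $\abs{\mathbb{P}(x)-\mathbb{Q}(x)}\leq \mathbb{P}(x)+\mathbb{Q}(x)$ implies $\abs{r}\leq 2$ $\mu$-a.e., hence $r\in\mathcal{L}_2(\mu)$ and $\chi_2(\mathbb{P},\mathbb{Q})\leq 2$ always; the truncation discussion can simply be dropped.
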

\vskip -0.14 in
We note here that LSGAN \cite{mao2016least} at full capacity corresponds to a Chi-Squared divergence,
with the main difference that LSGAN has different objectives for the generator and the discriminator (bilevel optimizaton),
and hence does not optimize a single objective that is a  distance between distributions.
The Chi-squared divergence can also be achieved in the $f$-gan framework from \cite{nowozin2016f}.
We discuss the advantages of the Fisher formulation in Appendix \ref{appendix:fgan}.

Optimizing over $\mathcal{L}_{2}(\pazocal{X},\frac{\mathbb{P}+\mathbb{Q}}{2})$ is not tractable, hence we have to restrict our function class, to a hypothesis class $\mathcal{H}$, that enables tractable computations. Here are some typical choices of the space $\mathcal{H}:$ Linear functions in the input features, RKHS, a non linear multilayer neural network with a linear last layer ($\mathcal{F}_{v,\omega}$). In this Section we don't make any assumptions about the function space and show in Theorem \ref{theo:ChiSquareapproxinH} how the Chi-squared distance is approximated in $\mathcal{H}$, and how this depends on the approximation error of the optimal critic $f_{\chi}$  in $\mathcal{H}$.
\vskip -0.1 in
\begin{theorem}[Approximating Chi-squared distance in an arbitrary function space $\mathcal{H}$]
Let $\mathcal{H}$ be an arbitrary symmetric function space. 
We define the inner product $\scalT{f}{f_{\chi}}_{\mathcal{L}_2(\pazocal{X},\frac{\mathbb{P}+\mathbb{Q}}{2})}=\int_{\pazocal{X}}f(x)f_{\chi}(x)\frac{\mathbb{P}(x)+\mathbb{Q}(x)}{2} dx$,
which induces the Lebesgue norm.
Let $\mathbb{S}_{\mathcal{L}_2(\pazocal{X},\frac{\mathbb{P}+\mathbb{Q}}{2})}$ be the unit sphere in $\mathcal{L}_2(\pazocal{X},\frac{\mathbb{P}+\mathbb{Q}}{2})$:
$\mathbb{S}_{\mathcal{L}_2(\pazocal{X},\frac{\mathbb{P}+\mathbb{Q}}{2})}=\{f:\pazocal{X}\to \mathbb{R},  \nor{f}_{\mathcal{L}_2(\pazocal{X},\frac{\mathbb{P}+\mathbb{Q}}{2})}=1 \}.$
 The fisher IPM defined on an arbitrary function space $\mathcal{H}$ $d_{\mathcal{H}}(\mathbb{P},\mathbb{Q})$, approximates the Chi-squared distance. The approximation quality depends on the cosine of the approximation of the optimal critic $f_{\chi}$ in $\mathcal{H}$. Since $\mathcal{H}$ is symmetric this cosine is always positive (otherwise the same equality holds with an absolute value)
\begin{equation*}
d_{\mathcal{H}}(\mathbb{P},\mathbb{Q})=  \chi_{2}(\mathbb{P},\mathbb{Q})  \sup_{f \in \mathcal{H} \cap ~ \mathbb{S}_{\mathcal{L}_2(\pazocal{X},\frac{\mathbb{P}+\mathbb{Q}}{2})}}\scalT{f}{f_{\chi}}_{\mathcal{L}_2(\pazocal{X},\frac{\mathbb{P}+\mathbb{Q}}{2})},
\end{equation*} 
\vskip -0.15 in
 Equivalently we have following relative approximation error:\\
$$
\frac{ \chi_2(\mathbb{P},\mathbb{Q})- d_{\mathcal{H}}(\mathbb{P},\mathbb{Q})}{\chi_2(\mathbb{P},\mathbb{Q})} =\frac{1}{2} \inf_{f\in \mathcal{H}\cap ~ \mathbb{S}_{\mathcal{L}_2(\pazocal{X},\frac{\mathbb{P}+\mathbb{Q}}{2})}} \nor{f-f_{\chi}}^2_{\mathcal{L}_2(\pazocal{X},\frac{\mathbb{P}+\mathbb{Q}}{2})}.
$$

\label{theo:ChiSquareapproxinH} 
\end{theorem}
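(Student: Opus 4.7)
The plan is to start from the constrained form of the Fisher IPM in Equation \eqref{eq:FisherIPMConstraint} and translate everything into the geometry of $\mathcal{L}_2(\pazocal{X},\frac{\mathbb{P}+\mathbb{Q}}{2})$. The first observation is that the denominator constraint $\frac{1}{2}\mathbb{E}_{\mathbb{P}}f^2 + \frac{1}{2}\mathbb{E}_{\mathbb{Q}}f^2 = 1$ is literally the condition $\nor{f}_{\mathcal{L}_2(\pazocal{X},\frac{\mathbb{P}+\mathbb{Q}}{2})}=1$, so the feasible set in $\mathcal{H}$ is exactly $\mathcal{H}\cap \mathbb{S}_{\mathcal{L}_2(\pazocal{X},\frac{\mathbb{P}+\mathbb{Q}}{2})}$. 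The numerator I would rewrite as an integral against $\mathbb{P}-\mathbb{Q}$, and then use the closed form of the optimal critic from Theorem~\ref{theo:chisquarefullcapacity}, namely $\mathbb{P}(x)-\mathbb{Q}(x) = \chi_2(\mathbb{P},\mathbb{Q})\, f_{\chi}(x)\,\tfrac{\mathbb{P}(x)+\mathbb{Q}(x)}{2}$. Substituting gives
\begin{equation*}
\mathbb{E}_{\mathbb{P}}f - \mathbb{E}_{\mathbb{Q}}f = \int f(x)(\mathbb{P}(x)-\mathbb{Q}(x))\,dx = \chi_2(\mathbb{P},\mathbb{Q})\,\scalT{f}{f_{\chi}}_{\mathcal{L}_2(\pazocal{X},\frac{\mathbb{P}+\mathbb{Q}}{2})},
\end{equation*}
so that taking the supremum over $f\in \mathcal{H}\cap \mathbb{S}_{\mathcal{L}_2}$ yields the first claimed identity immediately, with $\chi_2(\mathbb{P},\mathbb{Q})$ coming out as a nonnegative constant factor.

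For the equivalent ``relative error'' form, I would first verify the key small fact that $f_{\chi}$ itself lies on the unit sphere, i.e.\ $\nor{f_{\chi}}_{\mathcal{L}_2(\pazocal{X},\frac{\mathbb{P}+\mathbb{Q}}{2})}=1$. This is a one-line computation: plugging in the formula for $f_\chi$ gives $\nor{f_\chi}^2 = \frac{1}{\chi_2^2}\int \frac{(\mathbb{P}-\mathbb{Q})^2}{(\mathbb{P}+\mathbb{Q})/2}\,dx = 1$ by definition of $\chi_2$. Once both $f$ and $f_\chi$ are unit vectors, the polarization identity
\begin{equation*}
\nor{f-f_\chi}^2_{\mathcal{L}_2(\pazocal{X},\frac{\mathbb{P}+\mathbb{Q}}{2})} = 2 - 2\,\scalT{f}{f_\chi}_{\mathcal{L}_2(\pazocal{X},\frac{\mathbb{P}+\mathbb{Q}}{2})}
\end{equation*}
converts the supremum of the inner product into an infimum of the squared distance: $\sup_{f} \scalT{f}{f_\chi} = 1 - \tfrac{1}{2}\inf_{f}\nor{f-f_\chi}^2$. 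Combined with the first identity and a trivial rearrangement, this produces the stated expression for $(\chi_2 - d_{\mathcal{H}})/\chi_2$.

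The only subtlety worth a sentence is the symmetry remark. Without symmetry of $\mathcal{H}$, the supremum of $\scalT{f}{f_\chi}$ over the intersection with the unit sphere could be negative, and one would have to take an absolute value to match the nonnegative IPM $d_{\mathcal{H}}$; assuming $\mathcal{H}=-\mathcal{H}$ the supremum is automatically $\geq 0$ (achieved at least by $f=0$ via a limiting argument, or by noting that for any $f$ in the feasible set so is $-f$), so the identity holds without the absolute value. There is no real analytic obstacle here; the proof is essentially a change of variables in $\mathcal{L}_2(\pazocal{X},\frac{\mathbb{P}+\mathbb{Q}}{2})$ together with the Theorem~\ref{theo:chisquarefullcapacity} formula for $f_\chi$, and the ``hard'' step is simply recognizing that Fisher's denominator is the squared $\mathcal{L}_2$-norm with respect to the mixture measure.
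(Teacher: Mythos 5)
Your proposal is correct and follows essentially the same route as the paper: identify the Fisher constraint with the unit sphere of $\mathcal{L}_2(\pazocal{X},\frac{\mathbb{P}+\mathbb{Q}}{2})$, use the Theorem~\ref{theo:chisquarefullcapacity} formula for $f_\chi$ to write $\mathcal{E}(f)=\chi_2(\mathbb{P},\mathbb{Q})\scalT{f}{f_\chi}$ on the sphere, take the supremum, and convert to the relative-error form via $\nor{f-f_\chi}^2=2-2\scalT{f}{f_\chi}$ for unit vectors. The only nitpick is your parenthetical that nonnegativity of the supremum is ``achieved at least by $f=0$'' --- $f=0$ is not on the unit sphere --- but your alternative argument ($-f$ feasible whenever $f$ is, by symmetry of $\mathcal{H}$) is the correct one and matches the paper.
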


From Theorem \ref{theo:ChiSquareapproxinH}, we know that we have always $d_{\mathcal{H}}(\mathbb{P},\mathbb{Q})\leq \chi_{2}(\mathbb{P},\mathbb{Q})$. Moreover if the space $\mathcal{H}$ was rich enough to provide a good approximation of the optimal critic $f_{\chi}$, then $d_{\mathcal{H}}$ is a good approximation of  the Chi-squared distance $\chi_2$. 

Generalization bounds for the sample quality of the estimated  Fisher IPM from samples from $\mathbb{P}$ and $\mathbb{Q}$ can be done akin to \cite{IPMemp}, with the main difficulty that for Fisher IPM we have to bound  the excess risk of a cost function with data dependent constraints  on the function class. We give generalization bounds for learning the Fisher IPM in the supplementary material (\textbf{Theorem 3}, Appendix \ref{appendix:thm3}). 
In a nutshell the generalization error of the critic learned in a hypothesis  class $\mathcal{H}$ from samples of $\mathbb{P}$ and $\mathbb{Q}$,  decomposes to the approximation error from Theorem \ref{theo:ChiSquareapproxinH} and a statistical error that is bounded using data dependent local Rademacher complexities  \cite{bartlett2005} and scales like $O(\sqrt{\nicefrac{1}{n}}),n=\nicefrac{MN}{M+N}$.
We illustrate in Figure \ref{fig:toydata} our main theoretical claims on a toy problem.

\section{Fisher GAN Algorithm using ALM}
For any choice of the parametric function class $\mathcal{F}_{p}$ (for example $\mathcal{F}_{v,\omega}$), note the constraint in Equation \eqref{eq:empFisherGAN} by
$\hat{\Omega}(f_p,g_{\theta}) = \frac{1}{2 N} \sum_{i=1}^N f^2_{p}(x_i) + \frac{1}{2 N}\sum_{j=1}^N f^2_{p}(g_{\theta}(z_j)).$
Define the Augmented Lagrangian \cite{Nocedal2006NO} corresponding to Fisher GAN  objective and constraint given in Equation \eqref{eq:empFisherGAN}:
\begin{equation}
\pazocal{L}_{F}(p,\theta,\lambda)= \hat{\mathcal{E}}(f_p,g_\theta)+ \lambda(1-\hat{\Omega}(f_p,g_{\theta}))-\frac{\rho}{2}(\hat{\Omega}(f_{p},g_{\theta})-1)^2
\label{eq:alm}
\end{equation}
\vskip -0.15 in
where $\lambda$ is the Lagrange multiplier and $\rho>0$ is the quadratic penalty weight.
We alternate between optimizing the critic and the generator.  Similarly to \cite{gulrajani2017improved} we impose the constraint when training the critic only. Given $\theta$, for training the critic we solve $\max_{p} \min_{\lambda} \pazocal{L}_{F}(p,\theta,\lambda).$
Then given  the critic parameters $p$ we optimize the generator weights  $\theta$ to minimize the objective $ \min_{\theta}\hat{\mathcal{E}}(f_p,g_\theta) .$
We give in Algorithm \ref{alg:FisherGAN}, an algorithm for Fisher GAN, note that we use ADAM \cite{kingma2014adam} for optimizing the parameters of the critic and the generator. We use SGD for the Lagrange multiplier with learning rate $\rho$ following practices in Augmented Lagrangian \cite{Nocedal2006NO}. 
\vskip -0.12in
\begin{algorithm}[ht!]
   \caption{Fisher GAN}
   \label{alg:FisherGAN}
\begin{algorithmic}
   \STATE {\bfseries Input:} $\rho$ penalty weight, $\eta$ Learning rate, $n_c$ number of iterations for training the critic, N batch size
   \STATE {\bfseries Initialize} $p,\theta, \lambda=0$ 
   \REPEAT
   \FOR{$j=1$ {\bfseries to} $n_c$}
   \STATE Sample  a minibatch $x_i,i=1\dots N, x_i \sim \mathbb{P}_r$ 
   \STATE Sample  a minibatch $z_i,i=1\dots N, z_i \sim p_z$ 
   \STATE $(g_{p},g_{\lambda})\gets (\nabla_{p} {\pazocal{L}_{F}},\nabla_{\lambda}\pazocal{L}_{F})(p,\theta,\lambda) $
   \STATE $ p\gets p +\eta \text{ ADAM }(p,g_{p})$\\
   \STATE $\lambda \gets \lambda - \rho g_{\lambda}$ \COMMENT{SGD rule on $\lambda$ with learning rate $\rho$}
   \ENDFOR
   \STATE Sample ${z_i,i=1\dots N , z_i \sim p_z}$
   \STATE $d_{\theta}\gets \nabla_{\theta}\hat{\mathcal{E}}(f_p,g_{\theta})= -\nabla_{\theta}\frac{1}{N}\sum_{i=1}^N f_{p}(g_{\theta}(z_i))$ 
   \STATE $\theta \gets \theta -\eta \text{ ADAM }(\theta,d_{\theta})$
   \UNTIL{$\theta$  converges}
\end{algorithmic}
\end{algorithm}
\section{Experiments}
We experimentally validate the proposed Fisher GAN.
We claim three main results:
(1) stable training with a meaningful and stable loss going down as training progresses and correlating with sample quality, similar to \cite{arjovsky2017wasserstein, gulrajani2017improved}.
(2) very fast convergence to good sample quality as measured by inception score.
(3) competitive semi-supervised learning performance, on par with literature baselines, without requiring normalization of the critic.

We report results on three benchmark datasets: CIFAR-10 \cite{cifar10}, LSUN  \cite{yu2015lsun} and CelebA \cite{liu2015deep}.
We parametrize the generator $g_\theta$ and critic $f$ with convolutional neural networks following the model design from DCGAN \cite{radford2015unsupervised}.
For $64 \times 64$ images (LSUN, CelebA) we use the model architecture in Appendix \ref{sec:mdl1}, 
for CIFAR-10 we train at a $32 \times 32$ resolution using architecture in \ref{sec:mdl2} 
for experiments regarding sample quality (inception score),
while for semi-supervised learning we use a better regularized discriminator similar to the Openai \cite{salimans2016improved} and ALI \cite{dumoulin2016adversarially} architectures, as given in \ref{sec:mdl3}.
We used Adam \cite{kingma2014adam} as optimizer for all our experiments, hyper-parameters given in Appendix \ref{sec:hypers}.

\textbf{Qualitative: Loss stability and sample quality.}
Figure \ref{fig:samples} shows samples and plots during training.
For LSUN we use a higher number of D updates ($n_c=5$) , 
since we see similarly to WGAN that the loss shows large fluctuations with lower $n_c$ values.
For CIFAR-10 and CelebA we use reduced $n_c=2$ with no negative impact on loss stability.
CIFAR-10 here was trained without any label information.
We show both train and validation loss on LSUN and CIFAR-10 showing, as can be expected,
no overfitting on the large LSUN dataset and some overfitting on the small CIFAR-10 dataset.
To back up our claim that Fisher GAN provides stable training,
we trained both a Fisher Gan and WGAN where the batch normalization in the critic $f$ was removed (Figure \ref{fig:nobn}).
\begin{figure}[t]
\centering
  \includegraphics[width=0.3\textwidth]{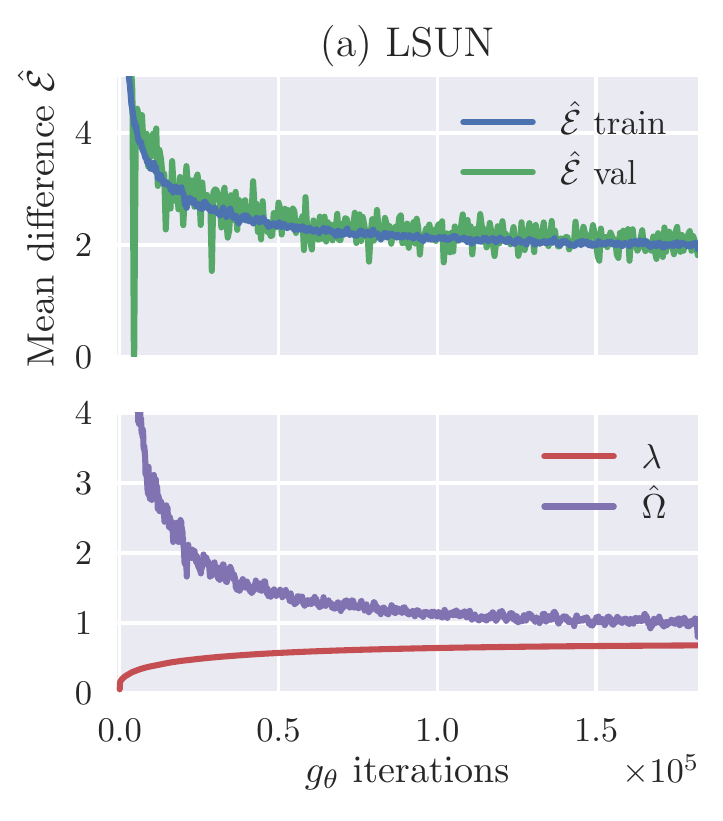}
  \includegraphics[width=0.3\textwidth]{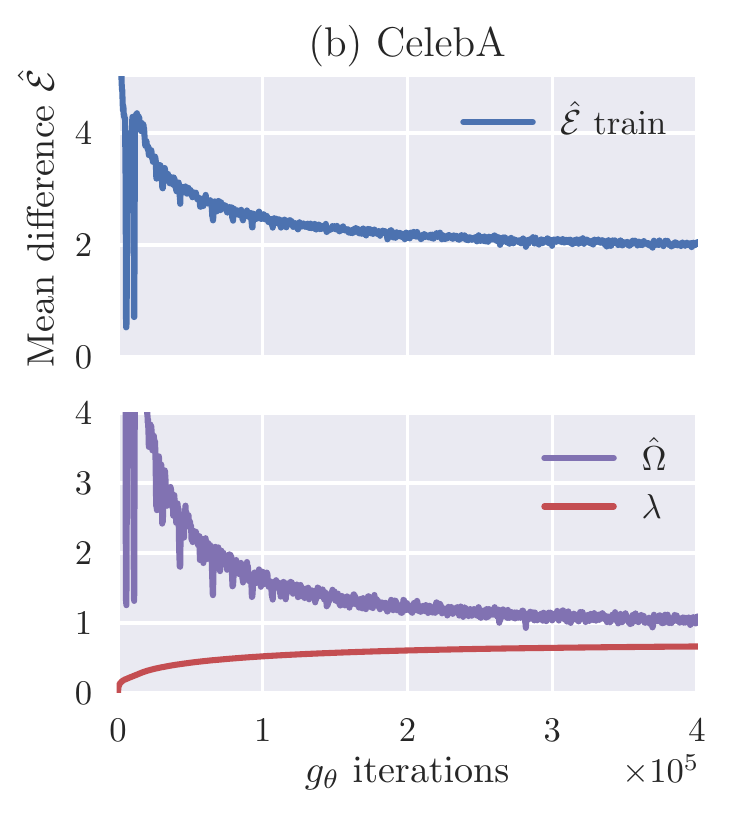}
  \includegraphics[width=0.3\textwidth]{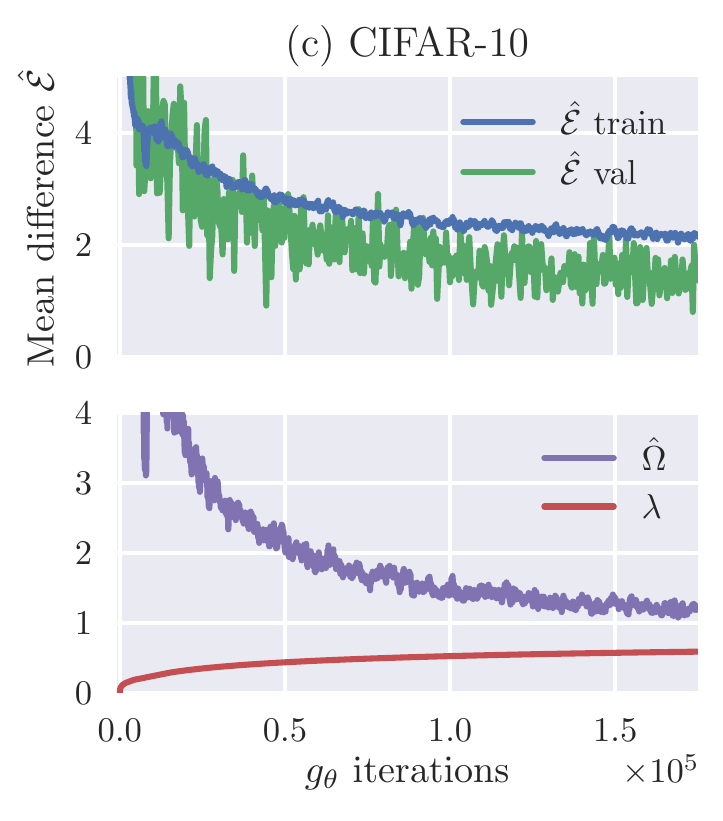}
\\ 
  \includegraphics[width=0.3\textwidth]{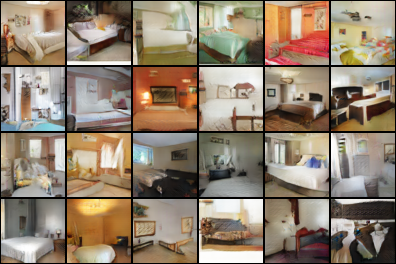}
  \includegraphics[width=0.3\textwidth]{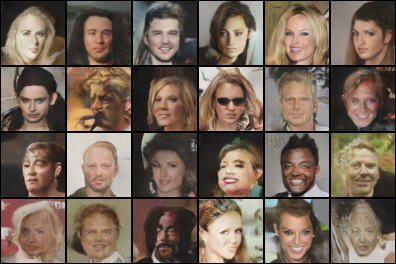}
  \includegraphics[width=0.3\textwidth]{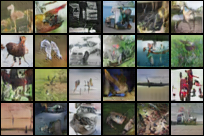}
  \caption{Samples and plots 
  of the loss $\hat{\mathcal{E}}(.)$, lagrange multiplier $\lambda$, and constraint $\hat{\Omega}(.)$
  on 3 benchmark datasets.
  We see that during training as $\lambda$ grows slowly, the constraint becomes tight.
}
  \label{fig:samples}
  \vskip -0.13in
\end{figure}

\begin{figure}[t]
  \centering
  \includegraphics[width=0.3\textwidth]{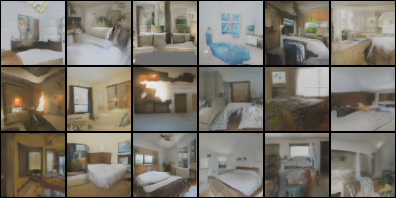}
  \includegraphics[width=0.3\textwidth]{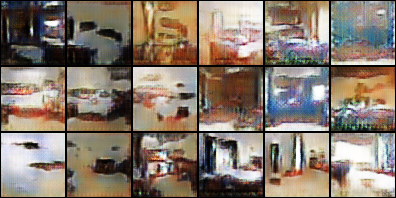}
  \caption{ \textbf{No Batch Norm:} Training results from a critic $f$ without batch normalization.
  Fisher GAN (left) produces decent samples, while WGAN with weight clipping (right) does not.
  We hypothesize that this is due to the implicit whitening that Fisher GAN provides.
  (Note that WGAN-GP does also succesfully converge without BN \cite{gulrajani2017improved}).
  For both models the learning rate was appropriately reduced.
  }
  \label{fig:nobn}
  \vskip -0.15in
\end{figure}

\textbf{Quantitative analysis: Inception Score and Speed.}
It is agreed upon that evaluating generative models is hard \cite{theis2015note}.
We follow the literature in using ``inception score'' \cite{salimans2016improved} as a metric for the quality of CIFAR-10 samples.
Figure \ref{fig:inception} shows the inception score as a function of number of $g_\theta$ updates and wallclock time.
All timings are obtained by running on a single K40 GPU on the same cluster.
We see from Figure \ref{fig:inception}, that Fisher GAN both produces better inception scores, and has a clear speed advantage over WGAN-GP.

\textbf{Quantitative analysis: SSL.} One of the main premises of unsupervised learning, is to learn features on a large corpus of unlabeled data in an unsupervised fashion,
which are then transferable to other tasks.
This provides a proper framework to measure the performance of our algorithm.
This leads us to quantify the performance of Fisher GAN by semi-supervised learning (SSL) experiments on CIFAR-10.
We do joint supervised and unsupervised training on CIFAR-10,
by adding a cross-entropy term to the IPM objective, in conditional and unconditional generation.
\begin{figure}[t]
\centering
  \includegraphics[width=0.60\textwidth,valign=t]{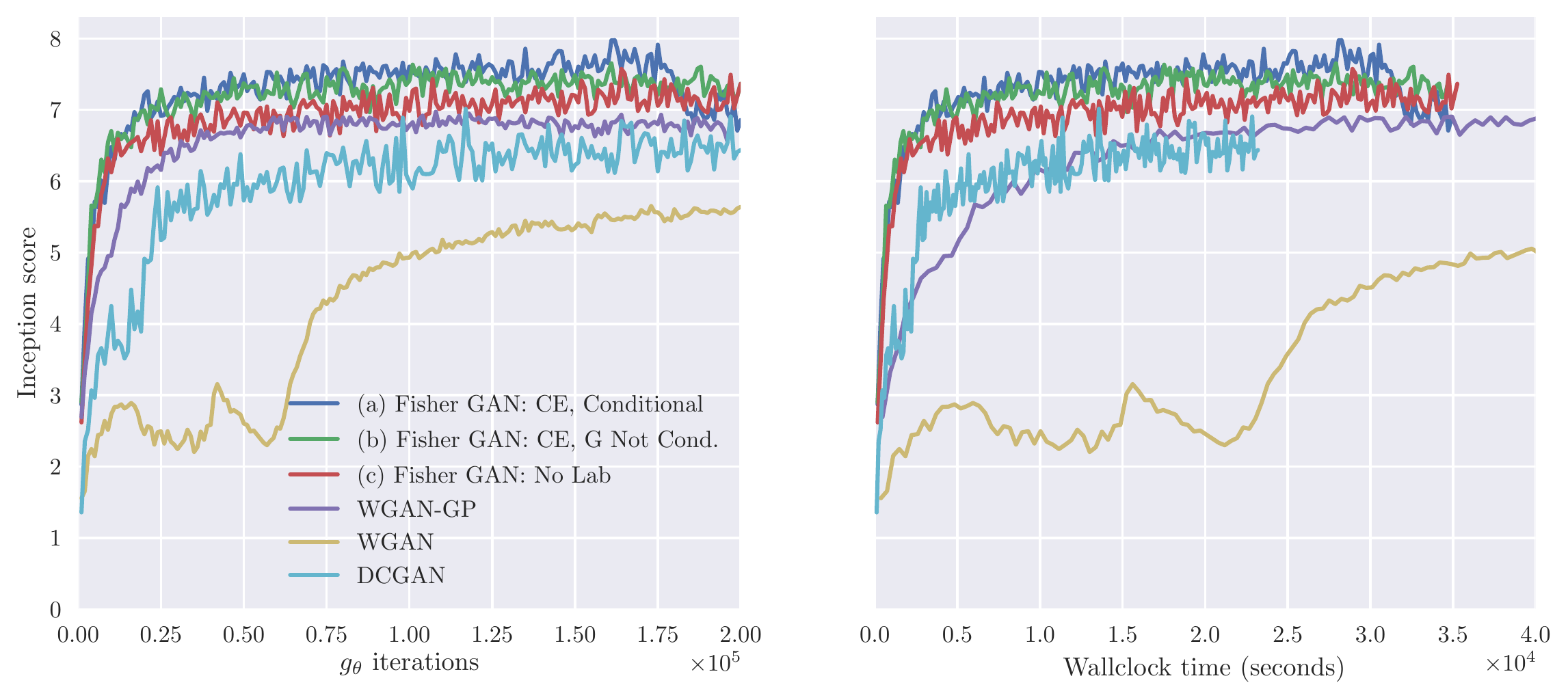}
  \includegraphics[width=0.35\textwidth,valign=t]{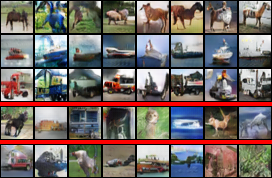}
  \caption{CIFAR-10 inception scores under 3 training conditions.
  Corresponding samples are given in rows from top to bottom (a,b,c).
  The inception score plots are mirroring Figure 3 from \cite{gulrajani2017improved}. \\
  \ifarxiv
  \textbf{Note} In v1 of this paper, the baseline inception scores were underestimated because they were computed using too few samples.\\
   \fi
  \textbf{Note} All inception scores are computed from the same tensorflow codebase, using the architecture described in appendix \ref{sec:mdl2},
  and with weight initialization from a normal distribution with stdev=0.02.
  In Appendix \ref{appendix:inception} we show that these choices are also benefiting our WGAN-GP baseline.
}
  \label{fig:inception}
  \vskip -0.2in
\end{figure}

\begin{table}[ht!]
\vskip -0.13in
\caption{\label{tab:inception_resnet} CIFAR-10 inception scores using resnet architecture and codebase from \cite{gulrajani2017improved}.
We used Layer Normalization \cite{ba2016layer} which outperformed unnormalized resnets.
Apart from this, no additional hyperparameter tuning was done to get stable training of the resnets.
}
\centering
\begin{tabular}{p{1.6in} p{0.6in}} \toprule
  Method & Score \\
\midrule
ALI \cite{dumoulin2016adversarially}                & $5.34 \pm .05$ \\
BEGAN \cite{berthelot2017began}                     & $5.62$ \\
DCGAN \cite{radford2015unsupervised} (in \cite{huang2016stacked})  & $6.16 \pm .07$ \\
Improved GAN (-L+HA) \cite{salimans2016improved}    & $6.86 \pm .06$ \\
EGAN-Ent-VI \cite{dai2017calibrating}               & $7.07 \pm .10$ \\
DFM \cite{warde2017improving}                       & $7.72 \pm .13$ \\
WGAN-GP ResNet \cite{gulrajani2017improved}         & $7.86 \pm .07$ \\
\emph{\textbf{Fisher GAN ResNet (ours)}}            & $7.90 \pm .05$ \\
\bottomrule
\vskip .1em
\qquad \qquad Unsupervised
\end{tabular}
\quad
\begin{tabular}{p{1.65in} p{0.6in}} \toprule
  Method & Score \\
\midrule
SteinGan \cite{wang2016learning}  & $6.35$ \\
DCGAN (with labels, in \cite{wang2016learning}) & $6.58$ \\
Improved GAN \cite{salimans2016improved}  & $8.09 \pm .07$ \\
\emph{Fisher GAN ResNet (ours) }          & $8.16 \pm .12$ \\
AC-GAN \cite{odena2016conditional}        & $8.25 \pm .07$ \\
SGAN-no-joint \cite{huang2016stacked}     & $8.37 \pm .08$ \\
WGAN-GP ResNet \cite{gulrajani2017improved} & $8.42 \pm .10$ \\
\textbf{SGAN \cite{huang2016stacked} }    & $8.59 \pm .12$ \\
\bottomrule
\vskip .1em
\qquad \qquad Supervised
\end{tabular}
\end{table}

\textbf{Unconditional Generation with CE Regularization.}
We parametrize the critic $f$ as in $\mathcal{F}_{v,\omega}$. While training the critic using the Fisher GAN objective $\pazocal{L}_{F}$ given in Equation \eqref{eq:alm}, we train a linear classifier on the feature space $\Phi_{\omega}$ of the critic, whenever labels are available ($K$ labels). The linear classifier is trained with Cross-Entropy (CE) minimization.
Then the critic loss becomes
$\pazocal{L}_{D} = \pazocal{L}_{F} - \lambda_D \sum_{(x,y) \in \text{lab}} CE(x,y;S,\Phi_{\omega})$, where  $CE(x,y; S, \Phi_\omega) = -\log \left[\rm{Softmax}(\scalT{S}{\Phi_{\omega}(x)})_y \right]$,
where $S \in \mathbb{R}^{K\times m}$ is the linear classifier and $\scalT{S}{\Phi_\omega} \in \mathbb{R}^{K}$ with slight abuse of notation.
$\lambda_{D}$ is the regularization hyper-parameter.
We now sample three minibatches for each critic update: one labeled batch from the small labeled dataset for the CE term, and an unlabeled batch + generated batch for the IPM.

\textbf{Conditional Generation with CE Regularization.}
We also trained \emph{conditional generator} models,
conditioning the generator on $y$ by concatenating the input noise with a 1-of-K embedding of the label: we now have $g_\theta(z,y)$.
We parametrize the critic in $\mathcal{F}_{v,\omega}$ and modify the critic objective as above.  We also add a cross-entropy term for the generator to minimize during its training step:
$\pazocal{L}_G = \hat{\mathcal{E}} + \lambda_G \sum_{z \sim p(z), y \sim p(y)} CE(g_\theta(z,y),y;S,\Phi_{\omega})$.
For generator updates we still need to sample only a single minibatch since we use the minibatch of samples from $g_\theta(z,y)$ to compute both the IPM loss $\mathcal{\hat{E}}$ and CE.
The labels are sampled according to the prior $y \sim p(y)$, which defaults to the discrete uniform prior when there is no class imbalance.
We found $\lambda_D=\lambda_G=0.1$ to be optimal.

\textbf{New Parametrization of the Critic: ``$K+1$ SSL''.} One specific successful formulation of SSL in the standard GAN framework was provided in \cite{salimans2016improved},
where the discriminator classifies samples into $K+1$ categories: the $K$ correct clases, and $K+1$ for fake samples. 
Intuitively this puts the real classes in competition with the fake class. 
In order to implement this idea in the Fisher framework, we define a new function class of the critic that puts in competition the $K$ class directions of the classifier $S_y$, and another ``K+1'' direction $v$ that indicates fake samples.
Hence we propose the following parametrization for the critic: $ f(x) = \sum_{y=1}^K p(y|x)\scalT{S_y}{\Phi_{\omega}(x)} - \scalT{v}{\Phi_{\omega}(x)}$, 
where $p(y|x)=\rm{Softmax}(\scalT{S}{\Phi_{\omega}(x)})_y$ which is also optimized with Cross-Entropy.
Note that this critic does not fall under the interpretation with whitened means from Section \ref{sec:whiteFisher},
but does fall under the general Fisher IPM framework from Section \ref{sec:fisheripm}.
We can use this critic with both conditional and unconditional generation in the same way as described above.
In this setting we found $\lambda_D=1.5,\, \lambda_G=0.1$ to be optimal.

\textbf{Layerwise normalization on the critic.}
For most GAN formulations following DCGAN design principles, batch normalization (BN) \cite{ioffe2015batch} in the critic is an essential ingredient.
From our semi-supervised learning experiments however, it appears that batch normalization gives substantially worse performance than layer normalization (LN) \cite{ba2016layer} or even no layerwise normalization.
We attribute this to the implicit whitening Fisher GAN provides.

Table \ref{tab:cifar} shows the SSL results on CIFAR-10.
We show that Fisher GAN has competitive results, on par with state of the art literature baselines.
When comparing to WGAN with weight clipping, it becomes clear that we recover the lost SSL performance.
Results with the $K+1$ critic are better across the board, proving consistently the advantage of our proposed $K+1$ formulation.
Conditional generation does not provide gains in the setting with layer normalization or without normalization.

\begin{table}[ht!]
\vskip -0.13in
\caption{\label{tab:cifar} CIFAR-10 SSL results. \ifarxiv \\ \textbf{Note} In v3, strong results were added using LN and no normalization. \fi
 }
\centering
\resizebox{\textwidth}{!}{\begin{tabular}{@{}lllll@{}} \toprule
Number of labeled examples & 1000 & 2000 & 4000 & 8000 \\
Model & \multicolumn{4}{c}{Misclassification rate} \\ \midrule
CatGAN \cite{springenberg2015unsupervised}    &              &      & $19.58$ &   \\
Improved GAN (FM) \cite{salimans2016improved} & $21.83 \pm 2.01$ & $19.61 \pm 2.09$ & $18.63 \pm 2.32$ & $17.72 \pm 1.82$ \\
ALI \cite{dumoulin2016adversarially}          & $19.98 \pm 0.89$ & $19.09 \pm 0.44$ & $17.99 \pm 1.62$ & $17.05 \pm 1.49$ \\
\midrule
WGAN (weight clipping) Uncond  & 69.01 & 56.48 & 40.85 & 30.56 \\ 
WGAN (weight clipping) Cond    & 68.11 & 58.59 & 42.00 & 30.91 \\
\midrule
\midrule
Fisher GAN BN Cond     &  $36.37$ &  $32.03$ &  $27.42$ &  $22.85$ \\
Fisher GAN BN Uncond   &  $36.42$ &  $33.49$ &  $27.36$ &  $22.82$ \\
Fisher GAN BN K+1 Cond   &  $34.94$ &  $28.04$ &  $23.85$ &  $20.75$ \\
Fisher GAN BN K+1 Uncond &  $33.49$ &  $28.60$ &  $24.19$ &  $21.59$ \\
\midrule
Fisher GAN LN Cond                   &  $26.78 \pm 1.04$ &  $23.30 \pm 0.39$ &  $20.56 \pm 0.64$ &  $18.26 \pm 0.25$ \\
Fisher GAN LN Uncond                 &  $24.39 \pm 1.22$ &  $22.69 \pm 1.27$ &  $19.53 \pm 0.34$ &  $17.84 \pm 0.15$ \\
Fisher GAN LN K+1 Cond                 &  $20.99 \pm 0.66$ &  $19.01 \pm 0.21$ &  $17.41 \pm 0.38$ &  $15.50 \pm 0.41$ \\
Fisher GAN LN K+1, Uncond              &  $19.74 \pm 0.21$ &  $17.87 \pm 0.38$ &  $16.13 \pm 0.53$ &  $14.81 \pm 0.16$ \\
\midrule
Fisher GAN No Norm K+1, Uncond & $21.15 \pm 0.54$ &  $18.21 \pm 0.30$ &  $16.74 \pm 0.19$ &  $14.80 \pm 0.15$ \\ 
\bottomrule
\end{tabular}}
\vskip -0.2in
\end{table}

\section{Conclusion}
We have defined Fisher GAN, which provide a stable and fast way of training GANs.
The Fisher GAN is based on a scale invariant IPM, by constraining the second order moments of the critic.
We provide an interpretation as whitened (Mahalanobis) mean feature matching and $\chi_2$ distance.
We show graceful theoretical and empirical advantages of our proposed Fisher GAN.
\paragraph{Acknowledgments.} The authors thank Steven J. Rennie for many helpful discussions and Martin Arjovsky for helpful clarifications and pointers.

\bibliographystyle{unsrt}
\bibliography{refs,simplex}

\begin{thebibliography}{10}

\bibitem{goodfellow2014generative}
Ian Goodfellow, Jean Pouget-Abadie, Mehdi Mirza, Bing Xu, David Warde-Farley,
  Sherjil Ozair, Aaron Courville, and Yoshua Bengio.
\newblock Generative adversarial nets.
\newblock In {\em NIPS}, 2014.

\bibitem{arjovsky2017towards}
Martin Arjovsky and L{\'e}on Bottou.
\newblock Towards principled methods for training generative adversarial
  networks.
\newblock In {\em ICLR}, 2017.

\bibitem{nowozin2016f}
Sebastian Nowozin, Botond Cseke, and Ryota Tomioka.
\newblock f-gan: Training generative neural samplers using variational
  divergence minimization.
\newblock In {\em NIPS}, 2016.

\bibitem{kaae2016amortised}
Casper Kaae~S{\o}nderby, Jose Caballero, Lucas Theis, Wenzhe Shi, and Ferenc
  Husz{\'a}r.
\newblock Amortised map inference for image super-resolution.
\newblock {\em ICLR}, 2017.

\bibitem{mao2016least}
Xudong Mao, Qing Li, Haoran Xie, Raymond~YK Lau, and Zhen Wang.
\newblock Least squares generative adversarial networks.
\newblock {\em arXiv:1611.04076}, 2016.

\bibitem{arjovsky2017wasserstein}
Martin Arjovsky, Soumith Chintala, and L{\'e}on Bottou.
\newblock Wasserstein gan.
\newblock {\em ICML}, 2017.

\bibitem{gulrajani2017improved}
Ishaan Gulrajani, Faruk Ahmed, Martin Arjovsky, Vincent Dumoulin, and Aaron
  Courville.
\newblock Improved training of wasserstein gans.
\newblock {\em arXiv:1704.00028}, 2017.

\bibitem{mroueh2017mcgan}
Youssef Mroueh, Tom Sercu, and Vaibhava Goel.
\newblock Mcgan: Mean and covariance feature matching gan.
\newblock {\em arXiv:1702.08398 ICML}, 2017.

\bibitem{salimans2016improved}
Tim Salimans, Ian Goodfellow, Wojciech Zaremba, Vicki Cheung, Alec Radford, and
  Xi~Chen.
\newblock Improved techniques for training gans.
\newblock {\em NIPS}, 2016.

\bibitem{muller1997integral}
Alfred M{\"u}ller.
\newblock Integral probability metrics and their generating classes of
  functions.
\newblock {\em Advances in Applied Probability}, 1997.

\bibitem{IPMemp}
Bharath~K. Sriperumbudur, Kenji Fukumizu, Arthur Gretton, Bernhard
  Sch\"{o}lkopf, and Gert R.~G. Lanckriet.
\newblock On the empirical estimation of integral probability metrics.
\newblock {\em Electronic Journal of Statistics}, 2012.

\bibitem{mohamed2016learning}
Shakir Mohamed and Balaji Lakshminarayanan.
\newblock Learning in implicit generative models.
\newblock {\em arXiv:1610.03483}, 2016.

\bibitem{gretton2012kernel}
Arthur Gretton, Karsten~M Borgwardt, Malte~J Rasch, Bernhard Sch{\"o}lkopf, and
  Alexander Smola.
\newblock A kernel two-sample test.
\newblock {\em JMLR}, 2012.

\bibitem{li2015generative}
Yujia Li, Kevin Swersky, and Richard Zemel.
\newblock Generative moment matching networks.
\newblock In {\em ICML}, 2015.

\bibitem{dziugaite2015training}
Gintare~Karolina Dziugaite, Daniel~M Roy, and Zoubin Ghahramani.
\newblock Training generative neural networks via maximum mean discrepancy
  optimization.
\newblock {\em UAI}, 2015.

\bibitem{harchaoui2008testing_nips}
Za{\"\i}d Harchaoui, Francis~R Bach, and Eric Moulines.
\newblock Testing for homogeneity with kernel fisher discriminant analysis.
\newblock In {\em NIPS}, 2008.

\bibitem{bartlett2005}
Peter~L. Bartlett, Olivier Bousquet, and Shahar Mendelson.
\newblock Local rademacher complexities.
\newblock {\em Ann. Statist.}, 2005.

\bibitem{Nocedal2006NO}
J.~Nocedal and S.~J. Wright.
\newblock {\em Numerical Optimization}.
\newblock Springer, 2nd edition, 2006.

\bibitem{kingma2014adam}
Diederik Kingma and Jimmy Ba.
\newblock Adam: A method for stochastic optimization.
\newblock In {\em ICLR}, 2015.

\bibitem{cifar10}
A.~Krizhevsky and G.~Hinton.
\newblock Learning multiple layers of features from tiny images.
\newblock {\em Master's thesis}, 2009.

\bibitem{yu2015lsun}
Fisher Yu, Ari Seff, Yinda Zhang, Shuran Song, Thomas Funkhouser, and Jianxiong
  Xiao.
\newblock Lsun: Construction of a large-scale image dataset using deep learning
  with humans in the loop.
\newblock {\em arXiv:1506.03365}, 2015.

\bibitem{liu2015deep}
Ziwei Liu, Ping Luo, Xiaogang Wang, and Xiaoou Tang.
\newblock Deep learning face attributes in the wild.
\newblock In {\em ICCV}, 2015.

\bibitem{radford2015unsupervised}
Alec Radford, Luke Metz, and Soumith Chintala.
\newblock Unsupervised representation learning with deep convolutional
  generative adversarial networks.
\newblock {\em arXiv:1511.06434}, 2015.

\bibitem{dumoulin2016adversarially}
Vincent Dumoulin, Ishmael Belghazi, Ben Poole, Alex Lamb, Martin Arjovsky,
  Olivier Mastropietro, and Aaron Courville.
\newblock Adversarially learned inference.
\newblock {\em ICLR}, 2017.

\bibitem{theis2015note}
Lucas Theis, A{\"a}ron van~den Oord, and Matthias Bethge.
\newblock A note on the evaluation of generative models.
\newblock {\em ICLR}, 2016.

\bibitem{ba2016layer}
Jimmy~Lei Ba, Jamie~Ryan Kiros, and Geoffrey~E Hinton.
\newblock Layer normalization.
\newblock {\em arXiv preprint arXiv:1607.06450}, 2016.

\bibitem{berthelot2017began}
David Berthelot, Tom Schumm, and Luke Metz.
\newblock Began: Boundary equilibrium generative adversarial networks.
\newblock {\em arXiv preprint arXiv:1703.10717}, 2017.

\bibitem{huang2016stacked}
Xun Huang, Yixuan Li, Omid Poursaeed, John Hopcroft, and Serge Belongie.
\newblock Stacked generative adversarial networks.
\newblock {\em arXiv preprint arXiv:1612.04357}, 2016.

\bibitem{dai2017calibrating}
Zihang Dai, Amjad Almahairi, Philip Bachman, Eduard Hovy, and Aaron Courville.
\newblock Calibrating energy-based generative adversarial networks.
\newblock {\em arXiv preprint arXiv:1702.01691}, 2017.

\bibitem{warde2017improving}
D~Warde-Farley and Y~Bengio.
\newblock Improving generative adversarial networks with denoising feature
  matching.
\newblock {\em ICLR submissions}, 8, 2017.

\bibitem{wang2016learning}
Dilin Wang and Qiang Liu.
\newblock Learning to draw samples: With application to amortized mle for
  generative adversarial learning.
\newblock {\em arXiv preprint arXiv:1611.01722}, 2016.

\bibitem{odena2016conditional}
Augustus Odena, Christopher Olah, and Jonathon Shlens.
\newblock Conditional image synthesis with auxiliary classifier gans.
\newblock {\em arXiv preprint arXiv:1610.09585}, 2016.

\bibitem{ioffe2015batch}
Sergey Ioffe and Christian Szegedy.
\newblock Batch normalization: Accelerating deep network training by reducing
  internal covariate shift.
\newblock {\em Proc. ICML}, 2015.

\bibitem{springenberg2015unsupervised}
Jost~Tobias Springenberg.
\newblock Unsupervised and semi-supervised learning with categorical generative
  adversarial networks.
\newblock {\em arXiv:1511.06390}, 2015.

\bibitem{metrictensor}
Alessandra Tosi, S{\o}ren Hauberg, Alfredo Vellido, and Neil~D. Lawrence.
\newblock Metrics for probabilistic geometries.
\newblock 2014.

\bibitem{Sriperumbudur2009OnIP}
Bharath~K. Sriperumbudur, Kenji Fukumizu, Arthur Gretton, Bernhard Scholkopf,
  and Gert R.~G. Lanckriet.
\newblock On integral probability metrics, $\phi$-divergences and binary
  classification.
\newblock 2009.

\bibitem{convexbook}
I.~Ekeland and T.~Turnbull.
\newblock {\em Infinite-dimensional Optimization and Convexity}.
\newblock The University of Chicago Press, 1983.

\bibitem{glorot2010understanding}
Xavier Glorot and Yoshua Bengio.
\newblock Understanding the difficulty of training deep feedforward neural
  networks.
\newblock In {\em International conference on artificial intelligence and
  statistics}, pages 249--256, 2010.

\bibitem{he2015delving}
Kaiming He, Xiangyu Zhang, Shaoqing Ren, and Jian Sun.
\newblock Delving deep into rectifiers: Surpassing human-level performance on
  imagenet classification.
\newblock {\em arXiv preprint arXiv:1502.01852}, 2015.

\end{thebibliography}

\newpage
\begin{center}
\large{\textbf{Supplementary Material  for Fisher GAN}}\\
Youssef Mroueh$^*$, Tom Sercu$^*$\\
IBM Research AI
\end{center}
\appendix

\section{WGAN-GP versus Fisher GAN}
\label{appendix:wgangpvsfisher}
Consider $$\mathcal{F}_{v,\omega}=\{f(x)=\scalT{v}{\Phi_{w}(x)} , v\in \mathbb{R}^m, \Phi_{\omega}: \pazocal{X}\subset \mathbb{R}^d\to \mathbb{R}^{m} \}$$
Let $$J_{\Phi_{\omega}}(x) \in \mathbb{R}^{m\times d}, [J_{\Phi_{\omega}}(x)]_{i,j}= \frac{\partial \scalT{e_i}{\Phi_{\omega}(x) }}{\partial x_j} $$ be the Jacobian matrix of the $\Phi_{\omega}(.)$.
It is easy to see that
$$\nabla_{x}f(x)=J^{\top}_{\Phi_{\omega}}(x)v \in \mathbb{R}^d, $$
and therefore $$ \nor{\nabla_{x}f(x)}^2=\scalT{v}{J_{\Phi_{\omega}}(x)J^{\top}_{\Phi_{\omega}}(x) v},$$
Note that ,
$$J_{\Phi_{\omega}}(x)J^{\top}_{\Phi_{\omega}}(x)$$
is the so called \emph{metric tensor} in information geometry (See for instance \cite{metrictensor} and references there in). 
The gradient penalty for WGAN of \cite{gulrajani2017improved} can be derived from a Rayleigh quotient principle as well, written in the constraint form:
$$d_{\mathcal{F}_{v,\omega}}(\mathbb{P},\mathbb{Q})  = \sup_{f \in \mathcal{F}_{v,\omega}, \mathbb{E}_{u\sim U[0,1]}\mathbb{E}_{x\sim u\mathbb{P}+(1-u)\mathbb{Q}}\nor{\nabla_xf(x)}^2=1} \mathbb{E}_{x\sim \mathbb{P}}f(x)-\mathbb{E}_{x\sim \mathbb{Q}}f(x)  $$
Using the special parametrization we can write: 
$$\mathbb{E}_{u\sim U[0,1]}\mathbb{E}_{x\sim u\mathbb{P}+(1-u)\mathbb{Q}}\nor{\nabla_xf(x)}^2= v^{\top} \left(\mathbb{E}_{u\sim U[0,1]}\mathbb{E}_{x\sim u\mathbb{P}+(1-u)\mathbb{Q}}J_{\Phi_{\omega}}(x)J^{\top}_{\Phi_{\omega}}(x)\right) v $$
Let $$\mathcal{M}_{\omega}(\mathbb{P};\mathbb{Q})= \mathbb{E}_{u\sim U[0,1]}\mathbb{E}_{x\sim u\mathbb{P}+(1-u)\mathbb{Q}}J_{\Phi_{\omega}}(x)J^{\top}_{\Phi_{\omega}}(x)\in \mathbb{R}^{m\times m}$$
is the expected Riemannian  metric tensor \cite{metrictensor}.
Hence we obtain:
\begin{align*}
d_{\mathcal{F}_{v,\omega}}(\mathbb{P},\mathbb{Q}) &=\max_{w}\max_{v, v^{\top}\mathcal{M}_{\omega}(\mathbb{P};\mathbb{Q})v=1}\scalT{v}{\mu_{\omega}(\mathbb{P})-\mu_{\omega}(\mathbb{Q})} \\
&=\max_{\omega} \nor{\mathcal{M}^{-\frac{1}{2}}_{\omega}(\mathbb{P};\mathbb{Q})(\mu_{\omega}(\mathbb{P}))-\mu_{\omega}(\mathbb{Q})}
\end{align*}
Hence Gradient penalty can be seen as well as mean matching in the metric defined by the expected metric tensor $\mathcal{M}_{\omega}$.

Improved WGAN \cite{gulrajani2017improved} IPM can be written as follows : $$\max_{\omega} \sqrt{(\mu_{\omega}(\mathbb{P})-\mu_{\omega}(\mathbb{Q}))^{\top}\mathcal{M}_{\omega}^{-1}(\mathbb{P};\mathbb{Q})(\mu_{\omega}(\mathbb{P})-\mu_{\omega}(\mathbb{Q}))}$$
to be contrasted with Fisher IPM:
$$\max_{\omega} \sqrt{(\mu_{\omega}(\mathbb{P})-\mu_{\omega}(\mathbb{Q}))^{\top}\Sigma_{\omega}^{-1}(\mathbb{P};\mathbb{Q})(\mu_{\omega}(\mathbb{P})-\mu_{\omega}(\mathbb{Q}))}$$

Both Improved WGAN are doing mean matching using different Mahalanobis distances! While improved WGAN uses an \emph{ expected metric tensor} $\mathcal{M}_{\omega}$ to compute this distance, Fisher IPM uses a simple pooled covariance $\Sigma_{\omega}$ to compute this metric. It is clear that Fisher GAN has a computational advantage!

\section{Chi-squared distance and Pearson Divergence }
\label{appendix:chisq_pearson}
The definition of $\chi_2$ distance:
$$\chi_2^2(\mathbb{P},\mathbb{Q})=2\int_{\pazocal{X}} \frac{(\mathbb{P}(x)-\mathbb{Q}(x))^2}{\mathbb{P}(x)+\mathbb{Q}(x)} dx. $$
The $\chi_2$  Pearson divergence:
$$\chi^{P}_2(\mathbb{P},\mathbb{Q})=\int_{\pazocal{X}}\frac{(\mathbb{P}(x)-\mathbb{Q}(x))^2}{\mathbb{Q}(x)} dx.$$
We have the following relation:
$$\chi^2_2(\mathbb{P},\mathbb{Q})=\frac{1}{4}\chi^{P}_2\left(\mathbb{P},\frac{\mathbb{P}+\mathbb{Q}}{2}\right). $$

\section{Fisher GAN and $\varphi$-divergence Based GANs}
\label{appendix:fgan}
Since $f$-gan \cite{nowozin2016f} also introduces a GAN formulation which recovers the Chi-squared divergence,
we compare our approaches.

Let us recall here the definition of  $\varphi$-divergence:
$$d_{\varphi}(\mathbb{P},\mathbb{Q})=\int_{\pazocal{X}} \varphi\left(\frac{\mathbb{P}(x)}{\mathbb{Q}(x)}\right)\mathbb{Q}(x) dx, $$
where $\varphi :\mathbb{R}^+\to \mathbb{R}$ is a convex, lower-semicontinuous function satisfying $\varphi(1)=0$. Let $\varphi^*$ the Fenchel conjugate of $\varphi$:
$$\varphi^*(t)=\sup_{u \in Dom_{\varphi}}{ut-\varphi(u)}$$
As shown in \cite{nowozin2016f} and in \cite{Sriperumbudur2009OnIP}, for any function space $\mathcal{F}$ we get the lower bound:
$$d_{\varphi}(\mathbb{P},\mathbb{Q})\geq \sup_{f \in \mathcal{F}} \mathbb{E}_{x\sim \mathbb{P}}f(x)-\mathbb{E}_{x\sim \mathbb{Q}}\varphi^*(f(x)) , $$
For the particular case $\varphi(t)=(t-1)^2$ and $\varphi^*(t)=\frac{1}{4}t^2+t$ we have the Pearson $\chi_2$ divergence:
$$d_{\varphi}(\mathbb{P},\mathbb{Q})=\int_{\pazocal{X}} \frac{(\mathbb{P}(x)-\mathbb{Q}(x))^2}{\mathbb{Q}(x)}dx=\chi^{P}_2(\mathbb{P},\mathbb{Q}) $$
Hence to optimize the same cost function of Fisher GAN in the $\varphi$-GAN framework we have to consider:
$$\frac{1}{2}\sqrt{\chi^{{P}}_2\left(\mathbb{P},\frac{\mathbb{P}+\mathbb{Q}}{2}\right)},$$
Fisher GAN gives an inequality for the symmetric Chi-squared and the $\varphi$-GAN gives a lower variational bound. i.e compare for $\varphi$-GAN:
\begin{align}
\sup_{f \in \mathcal{F}} \mathbb{E}_{x\sim \mathbb{P} }f(x)-\mathbb{E}_{x\sim \frac{\mathbb{P}+\mathbb{Q}}{2}}\varphi^*(f(x))&=\sup_{f \in \mathcal{F}}\mathbb{E}_{x \sim \mathbb{P}}f(x)  -\mathbb{E}_{x\sim \frac{\mathbb{P}+\mathbb{Q}}{2}}\left(\frac{1}{4}f^2(x)+f(x)\right)\nonumber\\
&= \sup_{f\in \mathcal{F}} \frac{1}{2}\left(\mathbb{E}_{x\sim \mathbb{P}} f(x)- \mathbb{E}_{x\sim \mathbb{Q}}f(x)\right)-\frac{1}{4}\mathbb{E}_{x\sim \frac{\mathbb{P}+\mathbb{Q}}{2}}f^2(x)
\label{eq:chisquaredfgan}
\end{align}
and for Fisher GAN:
\begin{equation}
\sup_{f\in \mathcal{F}, \mathbb{E}_{x\sim \frac{\mathbb{P}+\mathbb{Q}}{2}}f^2(x)=1} \mathbb{E}_{x\sim \mathbb{P}}f(x)-\mathbb{E}_{x\sim \mathbb{Q}}(f(x))  
\label{eq:chisquaredfisher}
\end{equation}
while equivalent at the optimum those two formulations for the symmetric Chi-squared given in Equations \eqref{eq:chisquaredfgan}, and \eqref{eq:chisquaredfisher} have different theoretical and practical properties.
On the theory side:
\begin{enumerate}
\item  While the formulation in  \eqref{eq:chisquaredfgan} is a  $\varphi$ divergence, the formulation given by the Fisher criterium in \eqref{eq:chisquaredfisher} is an IPM with a data dependent constraint. This is a surprising result because $\varphi$-divergences and IPM exhibit different properties and the only known non trivial  $\varphi$ divergence that is also an IPM with data independent function class is the total variation distance \cite{Sriperumbudur2009OnIP}. When we allow the function class to be dependent on the distributions, the symmetric Chi-squared  divergence (in fact general Chi-squared also) can be cast as an IPM! Hence in the context of GAN training we inherit the known stability of IPM based training for GANs. 
\item Theorem  \ref{theo:ChiSquareapproxinH}  for the Fisher criterium gives us an approximation error when we change the function from the space of measurable functions to a hypothesis class. It is not clear how tight the lower bound in the $\varphi$-divergence will be as we relax the function class.
\end{enumerate}
On the practical side:
\begin{enumerate}
\item Once we parametrize the critic $f$ as a neural network with linear output activation, i.e. $f(x)=\scalT{v}{\Phi_{\omega}(x)}$, 
  we see that the optimization is unconstrained for the $\varphi$-divergence formulation \eqref{eq:chisquaredfgan} and the weights updates can explode and have an 
  unstable behavior. On the other hand in the Fisher formulation \eqref{eq:chisquaredfisher} the data dependent constraint that is imposed slowly through the lagrange multiplier, 
  enforces a variance control that prevents the critic from blowing up and causing instabilities in the training. Note that in the Fisher case we have three players: the critic, 
  the generator and the lagrange multiplier. The lagrange multiplier grows slowly to enforce the constraint and to approach the Chi-squared distance as training converges. 
  Note that the $\varphi$-divergence formulation \eqref{eq:chisquaredfgan}
  can be seen as a Fisher GAN with fixed lagrange multiplier $\lambda=\frac{1}{2}$ that is indeed unstable in theory and in our experiments.
\end{enumerate}
\begin{remark} Note that if the Neyman divergence is of interest, it can also be obtained as the following Fisher criterium:
\begin{equation}
\sup_{f\in \mathcal{F}, \mathbb{E}_{x\sim \mathbb{P}}f^2(x)=1} \mathbb{E}_{x\sim \mathbb{P}}f(x)-\mathbb{E}_{x\sim \mathbb{Q}}(f(x)) , 
\end{equation}
this is equivalent at the optimum to:
$$ \chi^N_2(\mathbb{P},\mathbb{Q})=\int_{\pazocal{X}} \frac{(\mathbb{P}(x)-\mathbb{Q}(x))^2}{\mathbb{P}(x)} dx.$$
Using a neural network $f(x)=\scalT{v}{\Phi_{\omega}(x)}$, the Neyman divergence can be achieved with linear output activation and a data dependent constraint:
$$\sup_{v,\omega,~ \mathbb{E}_{x\sim \mathbb{P} } (\scalT{v}{\Phi_{\omega}(x)})^2=1} \scalT{v}{\mathbb{E}_{x\sim \mathbb{P}} \Phi_{\omega}(x)-\mathbb{E}_{x\sim \mathbb{Q}}\Phi_{\omega}(x)}$$

To obtain the same divergence as a $\varphi$-divergence we need $\varphi(u)=\frac{(1-u)^2}{u}$, and $\varphi^*(u)=2-2\sqrt{1-u} ,~(u<1) .$
Moreover exponential activation functions are used in \cite{nowozin2016f}, which most likely renders this formulation also unstable for GAN training.

\end{remark}

\section{Proofs}
\label{appendix:proofs}
\begin{proof}[Proof of Theorem \ref{theo:chisquarefullcapacity}]
Consider the space of measurable functions,  $$\mathcal{F}=\left\{f: \pazocal{X} \to \mathbb{R}, \text{ $f$ measurable such that } \int_{\pazocal{X}} f^2(x)\frac{(\mathbb{P}(x)+\mathbb{Q}(x))}{2}dx <\infty \right \}$$
meaning that $f\in \mathcal{L}_{2}(\pazocal{X},\frac{\mathbb{P}+\mathbb{Q}}{2})$.
\begin{eqnarray*}
d_{\mathcal{F}}(\mathbb{P},\mathbb{Q})&=&\sup_{f \in \mathcal{L}_{2}(\pazocal{X},\frac{\mathbb{P}+\mathbb{Q}}{2}), f\neq 0 }\frac{ \underset{x\sim \mathbb{P}}{\mathbb{E}}[ f(x)] - \underset{x\sim \mathbb{Q}}{\mathbb{E}}[f(x)]}{ \sqrt{\frac{1}{2}\mathbb{E}_{x\sim \mathbb{P} }f^2(x)+\frac{1}{2}\mathbb{E}_{x \sim \mathbb{Q}} f^2(x)} }\\
&=& \sup_{f \in \mathcal{L}_{2}(\pazocal{X},\frac{\mathbb{P}+\mathbb{Q}}{2}), \nor{f}_{\mathcal{L}_2(\pazocal{X},\frac{\mathbb{P}+\mathbb{Q}}{2})}=1 }\underset{x\sim \mathbb{P}}{\mathbb{E}}[ f(x)] - \underset{x\sim \mathbb{Q}}{\mathbb{E}}[f(x)]\\
&=& \sup_{f \in \mathcal{L}_{2}(\pazocal{X},\frac{\mathbb{P}+\mathbb{Q}}{2}), \nor{f}_{\mathcal{L}_2(\pazocal{X},\frac{\mathbb{P}+\mathbb{Q}}{2})}\leq1 }\underset{x\sim \mathbb{P}}{\mathbb{E}}[ f(x)] - \underset{x\sim \mathbb{Q}}{\mathbb{E}}[f(x)] \text{ (By convexity of the cost functional in  $f$)}\\
&=& \sup_{f \in \mathcal{L}_{2}(\pazocal{X},\frac{\mathbb{P}+\mathbb{Q}}{2}) }\inf _{\lambda \geq 0} \pazocal{L}(f,\lambda),
\end{eqnarray*}
where in the last equation we wrote  the lagrangian  of the Fisher IPM for this particular function class $\mathcal{F}:=\mathcal{L}_{2}(\pazocal{X},\frac{\mathbb{P}+\mathbb{Q}}{2})$:
$$\pazocal{L}(f,\lambda)= \int_{\pazocal{X}} f(x) (\mathbb{P}(x)-\mathbb{Q}(x)) dx + \frac{\lambda}{2}\left(1- \frac{1}{2}\ \int_{\pazocal{X}} f^2(x)(\mathbb{P}(x)+\mathbb{Q}(x))dx\right),$$
 By convexity of the functional cost and constraints, and since $f\in \mathcal{L}_{2}(\pazocal{X},\frac{\mathbb{P}+\mathbb{Q}}{2}) $, we can minimize the inner loss to optimize this functional for each $x\in \pazocal{X}$ \cite{convexbook}. The first order conditions of optimality (KKT conditions) gives us for the optimum $f_{\chi},\lambda_*$:
$$(\mathbb{P}(x)-\mathbb{Q}(x)) -\frac{\lambda_*}{2}f_{\chi}(x) (\mathbb{P}(x)+\mathbb{Q}(x))) =0 ,$$
$$f_{\chi}(x)= \frac{2}{\lambda_*} \frac{\mathbb{P}(x)-\mathbb{Q}(x)}{\mathbb{P}(x)+\mathbb{Q}(x)}.$$

Using the feasibility constraint: 
$ \int_{\pazocal{X}} f^2_{\chi}(x)\left(\frac{\mathbb{P}(x)+\mathbb{Q}(x)}{2}\right) =1, $
we get :
$$\int_{\pazocal{X}} \frac{4}{\lambda^2_*} \frac{(\mathbb{P}(x)-\mathbb{Q}(x))^2}{(\mathbb{P}(x)+\mathbb{Q}(x))^2}\left(\frac{\mathbb{P}(x)+\mathbb{Q}(x)}{2}\right) =1, $$
which gives us the expression of $\lambda_*$:
$$\lambda_*=\sqrt{\int_{\pazocal{X}} \frac{(\mathbb{P}(x)-\mathbb{Q}(x))^2}{\frac{\mathbb{P}(x)+\mathbb{Q}(x)}{2}} dx}. $$
Hence for $\mathcal{F}:=\mathcal{L}_{2}(\pazocal{X},\frac{\mathbb{P}+\mathbb{Q}}{2})$ we have:
$${d}_{\mathcal{F}}(\mathbb{P},\mathbb{Q})=  \int_{\pazocal{X}} f_{\chi}(x) (\mathbb{P}(x)-\mathbb{Q}(x)) dx=\sqrt{ \int_{\pazocal{X}} \frac{(\mathbb{P}(x)-\mathbb{Q}(x))^2}{\frac{\mathbb{P}(x)+\mathbb{Q}(x)}{2}} dx}=\lambda_*$$
Define the following distance between two distributions:
$$\chi_{2}(\mathbb{P},\mathbb{Q})= \nor{\frac{d\mathbb{P}}{\frac{d\mathbb{P}+d\mathbb{Q}}{2}}-\frac{d\mathbb{Q}}{\frac{d\mathbb{P}+d\mathbb{Q}}{2}} }_{\mathcal{L}_2(\pazocal{X},
\frac{\mathbb{P}+\mathbb{Q}}{2})},$$
We refer to this distance as  the $\chi_2$ distance between two distributions.
It is easy to see that :
$$d_{\mathcal{F}}(\mathbb{P},\mathbb{Q})= \chi_2(\mathbb{P},\mathbb{Q})$$
and the optimal critic $f_{\chi}$ has the following expression:
$$ f_{\chi}(x)=\frac{1}{\chi_2(\mathbb{P},\mathbb{Q})} \frac{\mathbb{P}(x)-\mathbb{Q}(x)}{\frac{\mathbb{P}(x)+\mathbb{Q}(x)}{2}}.$$
\end{proof}



\begin{proof}[Proof of Theorem \ref{theo:ChiSquareapproxinH}]
Define the means difference functional $\mathcal{E}$:
$$\mathcal{E}(f; \mathbb{P},\mathbb{Q})= \mathbb{E}_{x\sim \mathbb{P}}f(x) -\mathbb{E}_{x \sim \mathbb{Q}}f(x) $$
Let $$\mathbb{S}_{\mathcal{L}_2(\pazocal{X},\frac{\mathbb{P}+\mathbb{Q}}{2})}=\{f:\pazocal{X}\to \mathbb{R},  \nor{f}_{\mathcal{L}_2(\pazocal{X},\frac{\mathbb{P}+\mathbb{Q}}{2})}=1 \}$$
For a symmetric function class $\mathcal{H}$, the Fisher IPM has the following expression:
\begin{eqnarray*}
d_{\mathcal{H}}(\mathbb{P},\mathbb{Q})&=& \sup_{f\in \mathcal{H}, ~ \nor{f}_{\mathcal{L}_2(\pazocal{X},\frac{\mathbb{P}+\mathbb{Q}}{2})}=1  }\mathcal{E}(f;\mathbb{P},\mathbb{Q})\\
&=& \sup_{f \in \mathcal{H} \cap ~ \mathbb{S}_{\mathcal{L}_2(\pazocal{X},\frac{\mathbb{P}+\mathbb{Q}}{2})} }\mathcal{E}(f;\mathbb{P},\mathbb{Q}).
\end{eqnarray*}
Recall that for $\mathcal{H}=\mathcal{L}_2(\pazocal{X}, \frac{\mathbb{P}+\mathbb{Q}}{2})$, the optimum $\chi_{2}(\mathbb{P},\mathbb{Q})$ is achieved for :
$$ f_{\chi}(x)=\frac{1}{\chi_2(\mathbb{P},\mathbb{Q})} \frac{\mathbb{P}(x)-\mathbb{Q}(x)}{\frac{\mathbb{P}(x)+\mathbb{Q}(x)}{2}}, \forall x \in \pazocal{X} \text{ a.s}.$$
Let $f\in \mathcal{H}$ such that  $\nor{f}_{\mathcal{L}_2(\pazocal{X},\frac{\mathbb{P}+\mathbb{Q}}{2})}=1$ we have the following:
\begin{eqnarray*}
\scalT{f}{f_{\chi}}_{\mathcal{L}_2(\pazocal{X},\frac{\mathbb{P}+\mathbb{Q}}{2})}&=&\int_{\pazocal{X}}f(x)f_{\chi}(x)\frac{(\mathbb{P}(x)+\mathbb{Q}(x))}{2}dx\\
&=& \frac{1}{\chi_2(\mathbb{P},\mathbb{Q})} \int_{\pazocal{X}} f(x)(\mathbb{P}(x)-\mathbb{Q}(x)) dx\\
&=& \frac{\mathcal{E}(f;\mathbb{P},\mathbb{Q})}{\chi_2(\mathbb{P},\mathbb{Q})}. 
\end{eqnarray*}

It follows that for any $f \in \mathcal{H} \cap ~ \mathbb{S}_{\mathcal{L}_2(\pazocal{X},\frac{\mathbb{P}+\mathbb{Q}}{2})}$ we have:
\begin{equation}
\mathcal{E}(f;\mathbb{P},\mathbb{Q})= \chi_{2}(\mathbb{P},\mathbb{Q}) \scalT{f}{f_{\chi}}_{\mathcal{L}_2(\pazocal{X},\frac{\mathbb{P}+\mathbb{Q}}{2})}
\end{equation}
In particular taking the $\sup$ over $\mathcal{H} \cap ~ \mathbb{S}_{\mathcal{L}_2(\pazocal{X},\frac{\mathbb{P}+\mathbb{Q}}{2})}$ we have:
\begin{equation}
d_{\mathcal{H}}(\mathbb{P},\mathbb{Q})=  \chi_{2}(\mathbb{P},\mathbb{Q})  \sup_{f \in \mathcal{H} \cap ~ \mathbb{S}_{\mathcal{L}_2(\pazocal{X},\frac{\mathbb{P}+\mathbb{Q}}{2})}}\scalT{f}{f_{\chi}}_{\mathcal{L}_2(\pazocal{X},\frac{\mathbb{P}+\mathbb{Q}}{2})}.
\end{equation} 
note that since $\mathcal{H}$ is symmetric all quantities are positive after taking the sup (if $\mathcal{H}$ was not symmetric one can take the absolute values, and similar results hold with absolute values.)

If $\mathcal{H}$ is rich enough so that we find, for $\varepsilon \in (0,1)$, a  $1-\varepsilon$ approximation of $f_{\chi}$ in $\mathcal{H} \cap ~ \mathbb{S}_{\mathcal{L}_2(\pazocal{X},\frac{\mathbb{P}+\mathbb{Q}}{2})}$, i.e:
$$ \sup_{f \in \mathcal{H} \cap ~ \mathbb{S}_{\mathcal{L}_2(\pazocal{X},\frac{\mathbb{P}+\mathbb{Q}}{2})}}\scalT{f}{f_{\chi}}_{\mathcal{L}_2(\pazocal{X},\frac{\mathbb{P}+\mathbb{Q}}{2})}=1-\varepsilon$$
we have therefore that $d_{\mathcal{H}}$ is a $1-\varepsilon$ approximation of $\chi_{2}(\mathbb{P},\mathbb{Q})$:
$$ d_{\mathcal{H}}(\mathbb{P},\mathbb{Q})= (1-\varepsilon) \chi_{2}(\mathbb{P},\mathbb{Q}).$$
Since $f$ and $f_{\chi}$ are unit norm  in  $\mathcal{L}_{2}(\pazocal{X},\frac{\mathbb{P}+\mathbb{Q}}{2})$ we have the following relative error:
\begin{equation}
\frac{ \chi_2(\mathbb{P},\mathbb{Q})- d_{\mathcal{H}}(\mathbb{P},\mathbb{Q})}{\chi_2(\mathbb{P},\mathbb{Q})} =\frac{1}{2} \inf_{f\in \mathcal{H}\cap ~ \mathbb{S}_{\mathcal{L}_2(\pazocal{X},\frac{\mathbb{P}+\mathbb{Q}}{2})}} \nor{f-f_{\chi}}^2_{\mathcal{L}_2(\pazocal{X},\frac{\mathbb{P}+\mathbb{Q})}{2}}.
\end{equation}

\end{proof}

\section{Theorem 3:	Generalization Bounds}
\label{appendix:thm3}
Let $\mathcal{H}$ be a function space of real valued functions on $\pazocal{X}$. We assume that $\mathcal{H}$ is bounded, there exists $\nu>0$, such that $\nor{f}_{\infty}\leq \nu$.
Since the second moments are bounded we can relax this assumption using Chebyshev's  inequality, we have:
$$\mathbb{P}\left\{x \in \pazocal{X},\abs{f(x)}\leq \nu \right\}\leq \frac{ \underset{x\sim \frac{\mathbb{P}+\mathbb{Q}}{2}}{\mathbb{E}}f^2(x)}{\nu^2}=\frac{1}{\nu^2},$$
hence we have boundedness with high probability.
Define the expected mean discrepancy $\mathcal{E}(.)$ and the second order norm $\Omega(.)$:
$$\mathcal{E}(f)= \mathbb{E}_{x\sim \mathbb{P}}f(x) -\mathbb{E}_{x \sim \mathbb{Q}}f(x)~~, \Omega(f)=\frac{1}{2}\left(\mathbb{E}_{x\sim \mathbb{P}}f^2(x)+ \mathbb{E}_{x\sim \mathbb{Q}}f^2(x)\right )$$
and their empirical counterparts, given $N$ samples $\{x_i\}^N_{i=1}\sim \mathbb{P}$,$\{y_i\}^M_{i=1}\sim \mathbb{Q}$ :
$$\mathcal{\hat{E}}(f)= \frac{1}{N}\sum_{i=1}^N f(x_i) -\frac{1}{M}\sum_{i=1}^M f(y_i),~ \hat{\Omega}(f)= \frac{1}{2N}\sum_{i=1}^N f^2(x_i) +\frac{1}{2M}\sum_{i=1}^M f^2(y_i),$$
\begin{theorem}
Let $n=\frac{MN}{M+N}$. Let $\mathbb{P},\mathbb{Q}\in \mathcal{P}(\pazocal{X}), \mathbb{P}\neq \mathbb{Q}$, and let $\chi_2(\mathbb{P},\mathbb{Q})$ be their Chi-squared distance. 
Let $f^*\in \arg\max_{f\in \mathcal{H},\Omega(f)=1} \mathcal{E}(f)$, and $\hat{f}\in \arg\max_{f\in \mathcal{H},\hat{\Omega}(f)=1} \hat{\mathcal{E}}(f) $.
Define the expected mean discrepancy of the optimal empirical critic $\hat{f}$:
$$\hat{d}_{\mathcal{H}}(\mathbb{P},\mathbb{Q})=\mathcal{E}(\hat{f})$$
For $\tau>0$. The following generalization bound on the estimation of the Chi-squared distance, with probability $1-12e^{-\tau}$:
\begin{equation}
\frac{\chi_2(\mathbb{P},\mathbb{Q})-\hat{d}_{\mathcal{H}}(\mathbb{P},\mathbb{Q})}{\chi_2(\mathbb{P},\mathbb{Q})}\leq \underbrace{\frac{1}{2} \inf_{f\in \mathcal{H}\cap ~ \mathbb{S}_{\mathcal{L}_2(\pazocal{X},\frac{\mathbb{P}+\mathbb{Q}}{2})}} \nor{f-f_{\chi}}^2_{\mathcal{L}_2(\pazocal{X},\frac{\mathbb{P}+\mathbb{Q}}{2})}}_{\text{ approximation error }}+\underbrace{\frac{\varepsilon_n }{\chi_2(\mathbb{P},\mathbb{Q})}}_{\text{Statistical Error}}.
\end{equation}
where 
\begin{align*}
\varepsilon_n&= c_3  \mathcal{R}_{M,N}(f;\{f\in\mathcal{H}, \hat{\Omega}(f) \leq 1+\nu^2+2\eta_n\},S)\\
&+ c_4(1+2\nu \hat{\lambda}) \mathcal{R}_{M,N}(f;\{f \in \mathcal{H},  \hat{\Omega}(f)\leq 1+\frac{\nu^2}{2}+\eta_n\},S) 
+O(\frac{1}{\sqrt{n}})
\end{align*}
and 
$$\eta_n \geq c_1 \nu {\mathcal{R}_{N,M}(f; f\in \mathcal{H},S) }+c_2\frac{\nu^2\tau}{n},$$


 $\hat{\lambda}$ is the Lagrange multiplier, $c_1,c_2,c_3,c_4$ are numerical constants, 
and $\mathcal{R}_{M,N}$  is  the rademacher complexity:
$$\mathcal{R}_{M,N}(f;\mathcal{F},S)= E_{\sigma}\sup_{f \in \mathcal{F}} \left[ \sum_{i=1}^{N+M} \sigma_i \tilde{Y}_if(X_i)|S\right],$$
$ \tilde{Y}=(\underbrace{\frac{1}{N},\dots \frac{1}{N}}_{N},\underbrace{\frac{-1}{M}\dots \frac{-1}{M}}_{M})$,  $S=\{x_1\dots x_N, y_1\dots y_{M} \}$,$\sigma_i=\pm 1$ with probability $\frac{1}{2}$, that are iids.

\label{theo:GenBounds}
\end{theorem}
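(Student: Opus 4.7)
The plan is to decompose the relative gap as
$$\chi_2(\mathbb{P},\mathbb{Q}) - \hat{d}_{\mathcal{H}}(\mathbb{P},\mathbb{Q}) \;=\; \bigl(\chi_2(\mathbb{P},\mathbb{Q}) - d_{\mathcal{H}}(\mathbb{P},\mathbb{Q})\bigr) + \bigl(\mathcal{E}(f^*) - \mathcal{E}(\hat f)\bigr),$$
divide by $\chi_2(\mathbb{P},\mathbb{Q})$, and invoke Theorem \ref{theo:ChiSquareapproxinH} to identify the first piece with the stated approximation error. What remains is to show that $\mathcal{E}(f^*)-\mathcal{E}(\hat f)\le \varepsilon_n$ with high probability. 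The core difficulty, and the reason the statement invokes local Rademacher complexities rather than a vanilla Dudley bound, is that $f^*$ and $\hat f$ lie on different unit spheres: $\Omega(f^*)=1$ while $\hat\Omega(\hat f)=1$, and these two sphere differ by a data-dependent perturbation.

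The first step is therefore a uniform second-moment concentration: apply Bousquet's inequality to the class $\{f^2:f\in\mathcal{H}\}$, which is bounded by $\nu^2$ and whose Rademacher complexity reduces to that of $\mathcal{H}$ by Talagrand contraction (since $t\mapsto t^2$ is $2\nu$-Lipschitz on $[-\nu,\nu]$). This produces exactly the quantity $\eta_n\gtrsim \nu\mathcal{R}_{N,M}(\mathcal{H},S)+\nu^2\tau/n$ such that $\sup_{f\in\mathcal{H}}|\Omega(f)-\hat\Omega(f)|\le \eta_n$ on a $1-O(e^{-\tau})$ event. On this event, $f^*$ is almost-feasible for the empirical problem, namely $\hat\Omega(f^*)\le 1+\eta_n$, and $\hat f$ is almost-feasible for the population problem with $\Omega(\hat f)\le 1+\eta_n$; rescaling $f^*$ by $1/\sqrt{1+\eta_n}=1-O(\eta_n)$ gives a strictly empirically feasible critic and shows $\hat{d}_{\mathcal{H}}\ge (1-O(\eta_n))\mathcal{E}(f^*)/\sqrt{1+\eta_n}$.

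The second step is a uniform empirical-process bound on the relaxed, data-dependent sublevel set $\mathcal{H}_n:=\{f\in\mathcal{H}:\hat\Omega(f)\le 1+\nu^2+2\eta_n\}$, obtained by another application of Bousquet/McDiarmid plus symmetrization, producing the first Rademacher term $c_3\,\mathcal{R}_{M,N}(\mathcal{H}_n,S)$. The Lagrange-multiplier factor $(1+2\nu\hat\lambda)$ in the second Rademacher term arises from a perturbation analysis of the empirical KKT system: expanding
$\mathcal{L}(f,\lambda)=\hat{\mathcal{E}}(f)+\lambda(1-\hat\Omega(f))$ around $(\hat f,\hat\lambda)$ and swapping $\hat{\mathcal{E}},\hat\Omega$ for their population counterparts generates a residual of the form $\hat\lambda\cdot(\Omega(\hat f)-\hat\Omega(\hat f))$, whose uniform control costs a Lipschitz factor $2\nu$ in the contraction step and forces a second, slightly smaller sublevel set $\{\hat\Omega(f)\le 1+\nu^2/2+\eta_n\}$. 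Chaining all deviations and taking a union bound over the (roughly six) concentration events yields the claimed $1-12e^{-\tau}$ probability.

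The hard part will be handling the fact that the index set $\mathcal{H}_n$ itself depends on the sample: one cannot directly apply symmetrization to a supremum over a random class. The cleanest remedy is a peeling argument over dyadic sublevel sets of the deterministic functional $\Omega$ in the spirit of Bartlett--Bousquet--Mendelson, combined with the step-one bound $\hat\Omega\le \Omega+\eta_n$ to transfer the supremum from the random class to its deterministic envelope before symmetrizing. Routine bookkeeping of constants then delivers the $O(1/\sqrt{n})$ rate with $n=MN/(M+N)$ absorbed into $c_1,c_2,c_3,c_4$.
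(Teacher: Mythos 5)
Your outline is essentially the paper's proof: the same decomposition $\chi_2-\hat d_{\mathcal H}=(\mathcal{E}(f_\chi)-\mathcal{E}(f^*))+(\mathcal{E}(f^*)-\mathcal{E}(\hat f))$ with Theorem \ref{theo:ChiSquareapproxinH} identifying the first term, and the same machinery for the second term (Bousquet-type concentration plus symmetrization, the contraction lemma applied to $t\mapsto t^2$ with Lipschitz constant $2\nu$ to control $\hat\Omega$ versus $\Omega$, transfer between population and empirical sublevel sets by set inclusion, and a union bound over the individual events). The one place you diverge is the comparison of the two optima: the paper does not rescale $f^*$ onto the empirical sphere, but instead uses the empirical saddle-point inequality $\hat{\pazocal{L}}(f^*,\hat\lambda)\le\hat{\pazocal{L}}(\hat f,\hat\lambda)$ together with $\Omega(f^*)=1$ and $\hat\Omega(\hat f)=1$, which in one line yields $\mathcal{E}(f^*)-\mathcal{E}(\hat f)\le Z_1(S)+Z_2(S)+\hat\lambda Z_3(S)$, with $Z_3$ the uniform deviation of $\hat\Omega$ on $\{\Omega\le 1\}$; this is exactly where the factor $(1+2\nu\hat\lambda)$ comes from, so your ``KKT perturbation'' reading of that factor is the right mechanism (evaluated at $f^*$, not $\hat f$). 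Your rescaling variant can be made to work, but note it needs $\eta_n$ to also absorb the deviation term of order $\nu^2\sqrt{\tau/n}$, which your stated form $c_1\nu\mathcal{R}+c_2\nu^2\tau/n$ omits; the paper instead absorbs that term by AM--GM into a constant offset, which is precisely the origin of the radii $1+\frac{\nu^2}{2}+\eta_n$ and $1+\nu^2+2\eta_n$ in the statement — your sketch does not account for those offsets. Finally, the peeling argument you anticipate for the data-dependent index class is unnecessary: the paper handles $Z_2$ with a simple two-way inclusion, $\{\hat\Omega(f)\le 1\}\subset\{\Omega(f)\le 1+\frac{\nu^2}{2}+\eta_n\}$ to make the class deterministic before concentrating, and then the reverse inclusion to re-express the resulting local Rademacher complexity in terms of $\hat\Omega$-balls as in the theorem.
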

For example:
$$\mathcal{H}=\{f(x)=\scalT{v}{\Phi(x)}, v\in \mathbb{R}^{m} \}$$ Note that for simplicity here we assume that the feature map is fixed $\Phi: \pazocal{X}\to \mathbb{R}^m$, and we parametrize the class function only with $v$.
 $$\mathcal{R}_{M,N}(f;\{\mathcal{H}, \hat{\Omega}(f) \leq R\},S)) \leq \sqrt{2R \frac{ d(\gamma)}{n}}, $$
where
 $$d(\gamma)=\sum_{j=1}^m \frac{\sigma^2_j}{\sigma^2_j+\gamma}$$
is the effective dimension $(d(\gamma)<<m)$. Hence we see that typically $\varepsilon_n= O(\frac{1}{\sqrt{n}})$.

\begin{proof}[Proof of Theorem \ref{theo:GenBounds}]
Let $\{x_i\}_{i=1}^N \sim \mathbb{P}$, $\{y_i\}_{i=1}^M \sim \mathbb{Q}$.
Define the following functionals:
$$\mathcal{E}(f)= \mathbb{E}_{x\sim \mathbb{P}}f(x) -\mathbb{E}_{x \sim \mathbb{Q}}f(x)~~, \Omega(f)=\frac{1}{2}\left(\mathbb{E}_{x\sim \mathbb{P}}f^2(x)+ \mathbb{E}_{x\sim \mathbb{Q}}f^2(x)\right )$$
and their empirical estimates:
$$\mathcal{\hat{E}}(f)= \frac{1}{N}\sum_{i=1}^N f(x_i) -\frac{1}{M}\sum_{i=1}^M f(y_i),~ \hat{\Omega}(f)= \frac{1}{2N}\sum_{i=1}^N f^2(x_i) +\frac{1}{2M}\sum_{i=1}^M f^2(y_i)$$
Define the following Lagrangians:
$$\pazocal{L}(f,\lambda)=\mathcal{E}(f)+\frac{\lambda}{2}(1-\Omega(f)), ~\pazocal{\hat{L}}(f,\lambda)=\mathcal{\hat{E}}(f)+\frac{\lambda}{2}(1-\hat{\Omega}(f))$$

Recall some definitions of the Fisher IPM:
$$d_{\mathcal{H}}(\mathbb{P},\mathbb{Q})=\sup_{f\in \mathcal{H}}\inf_{\lambda\geq 0}\pazocal{L}(f,\lambda) ~ \text{ achieved at } (f_*,\lambda_*) $$
We assume that a saddle point for this problem exists and it is feasible. We assume also that $\hat{\lambda}$ is positive and bounded.
$$d_{\mathcal{H}}(\mathbb{P},\mathbb{Q})=\mathcal{E}(f_*) \text{ and } \Omega(f_*)=1$$
$$\pazocal{L}(f,\lambda_*)\leq \pazocal{L}(f_*,\lambda_*)\leq  \pazocal{L}(f_*,\lambda)$$
The fisher IPM  empirical estimate is given by:
$$d_{\mathcal{H}}(\mathbb{P}_{N},\mathbb{Q}_{N})=\sup_{f\in \mathcal{H}}\inf_{\lambda\geq 0}\pazocal{\hat{L}}(f,\lambda),~ \text{ achieved at } (\hat{f},\hat{\lambda}) $$
hence we have: 
$$d_{\mathcal{H}}(\mathbb{P}_{N},\mathbb{Q}_{N})=\mathcal{\hat{E}}(\hat{f}) \text{ and } \hat{\Omega}(\hat{f})=1.$$
The Generalization error of the empirical critic $\hat{f}$ is the expected mean discrepancy $\mathcal{E}(\hat{f})$. We note $ \hat{d}_{\mathcal{H}}(\mathbb{P},\mathbb{Q})=\mathcal{E}(\hat{f})$, the estimated distance using the critic $\hat{f}$, on out of samples: 
\begin{align*}
\chi_2(\mathbb{P},\mathbb{Q})-\hat{d}_{\mathcal{H}}(\mathbb{P},\mathbb{Q})&=\mathcal{E}(f_{\chi})-\mathcal{E}(\hat{f})\\
&=\underbrace{\mathcal{E}(f_{\chi})-\mathcal{E}(f^*)}_{\text{Approximation Error}}+\underbrace{\mathcal{E}(f^*)-\mathcal{E}(\hat{f})}_{\text{Statistical Error}}
\end{align*}

\textbf{Bounding the Approximation Error.} By Theorem \ref{theo:ChiSquareapproxinH} we know that:
$$\mathcal{E}(f_{\chi})-\mathcal{E}(f^*)= \chi_2(\mathbb{P},\mathbb{Q})- d_{\mathcal{H}}(\mathbb{P},\mathbb{Q}) =\frac{\chi_2(\mathbb{P},\mathbb{Q})}{2} \inf_{f\in \mathcal{H}\cap ~ \mathbb{S}_{\mathcal{L}_2(\pazocal{X},\frac{\mathbb{P}+\mathbb{Q}}{2})}} \nor{f-f_{\chi}}^2_{\mathcal{L}_2(\pazocal{X},\frac{\mathbb{P}+\mathbb{Q}}{2})}.$$
Hence we have for $\mathbb{P} \neq \mathbb{Q}$:
\begin{equation}
\frac{\chi_2(\mathbb{P},\mathbb{Q})-\hat{d}_{\mathcal{H}}(\mathbb{P},\mathbb{Q})}{\chi_2(\mathbb{P},\mathbb{Q})}=\frac{1}{2} \inf_{f\in \mathcal{H}\cap ~ \mathbb{S}_{\mathcal{L}_2(\pazocal{X},\frac{\mathbb{P}+\mathbb{Q}}{2})}} \nor{f-f_{\chi}}^2_{\mathcal{L}_2(\pazocal{X},\frac{\mathbb{P}+\mathbb{Q}}{2})}+\underbrace{\frac{\mathcal{E}(f^*)-\mathcal{E}(\hat{f})}{\chi_2(\mathbb{P},\mathbb{Q})}}_{\text{Statistical Error}}
\end{equation}
Note that this equation tells us that the relative error depends on the approximation error of the the optimal critic $f_{\chi}$, and the statistical error coming from using finite samples in approximating the distance. We note that the statistical error is divided by the Chi-squared distance, meaning that we need a bigger sample size when $\mathbb{P}$
 and $\mathbb{Q}$ are close in the Chi-squared sense, in order to reduce the overall relative error.
 
Hence we are left with bounding the statistical error using empirical processes theory.
Assume $\mathcal{H}$ is a space of bounded functions i.e $\nor{f}_{\infty}\leq \nu$.

\textbf{Bounding the Statistical Error.}
Note that we have: (i) $\hat{\pazocal{L}}(f^*,\hat{\lambda})\leq \hat{\pazocal{L}}(\hat{f},\hat{\lambda}) $ and (ii) $\Omega(f^*)=1$.
\begin{align*}
\mathcal{E}(f^*)-\mathcal{E}(\hat{f})&= \left(\mathcal{E}(f^*)- \mathcal{\hat{E}}(f^*) \right)+(\underbrace{\mathcal{\hat{E}}(f^*)+\frac{\hat{\lambda}}{2}(1-\hat{\Omega}(f^*))}_{\hat{\pazocal{L}}(f^*,\hat{\lambda})}- \underbrace{\mathcal{\hat{E}}(\hat{f})}_{\hat{\pazocal{L}}(\hat{f},\hat{\lambda})} ) +\left( \mathcal{\hat{E}}(\hat{f})-\mathcal{E}(\hat{f})\right)+\frac{\hat{\lambda}}{2}\left(\hat{\Omega}(f^*)-1\right)\\
&\leq  \sup_{f \in \mathcal{H}, \Omega(f)\leq 1} \abs{\mathcal{\hat{E}}({f})-\mathcal{E}({f}) }+ \sup_{f \in \mathcal{H}, \hat{\Omega}(f)\leq 1} \abs{\mathcal{\hat{E}}({f})-\mathcal{E}({f}) }+ \frac{\hat{\lambda}}{2}\left(\hat{\Omega}(f^*)-\Omega(f^*)\right) \text{Using (i) and (ii)}\\
&\leq  \sup_{f \in \mathcal{H}, \Omega(f)\leq 1} \abs{\mathcal{\hat{E}}({f})-\mathcal{E}({f}) }+ \sup_{f \in \mathcal{H}, \hat{\Omega}(f)\leq 1} \abs{\mathcal{\hat{E}}({f})-\mathcal{E}({f}) }+ \frac{\hat{\lambda}}{2}\sup_{f\in \mathcal{H},\Omega(f)\leq 1}\abs{\hat{\Omega}(f)-\Omega(f)}.
\end{align*}

Let $S=\{x_1\dots x_N, y_1\dots y_{M} \}$. Define the following quantities: 
$$Z_1(S) = \sup_{f \in \mathcal{H},\Omega(f)\leq 1} \abs{\mathcal{\hat{E}}({f})-\mathcal{E}({f}) }, \text{ Concentration of the cost on data distribution dependent constraint}$$
$$ Z_2(S)= \sup_{f \in \mathcal{H},\hat{\Omega}(f)\leq 1} \abs{\mathcal{\hat{E}}({f})-\mathcal{E}({f}) },\text{ Concentration of the cost on an empirical data dependent constraint}  $$
$$ Z_3(S)=\sup_{f\in \mathcal{H},\Omega(f)\leq 1}\abs{\hat{\Omega}(f)-\Omega(f)}, \hat{\lambda} Z_3(S) \text{ is the sensitivity of the cost as the constraint set changes }  $$
We have:
\begin{equation}
\mathcal{E}(f^*)-\mathcal{E}(\hat{f})\leq {Z_1(S)}+{Z_2(S)}+\hat{\lambda} Z_3(S),
\end{equation}

Note that the $\sup$ in   $Z_1(S)$ and  $Z_3(S)$ is taken with respect to class function $\{f, \Omega(f)=\nor{f}^2_{\mathcal{L}_{2}(\pazocal{X},\frac{\mathbb{P}+\mathbb{Q}}{2})}\leq 1\}$
hence we will bound $Z_1(S),$ and $Z_3(S)$ using local Rademacher complexity. In  $Z_2(S)$ the $\sup$ is taken on a data dependent function class and can be bounded with local rademacher complexity as well but needs more careful work.

\textbf{Bounding $Z_1(S)$, and $Z_3(S)$}

\begin{lemma} [Bounds with (Local) Rademacher Complexity \cite{IPMemp,bartlett2005}]
Let $Z(S)= \sup_{f\in \mathcal{F}} \mathcal{E}(f)-\hat{\mathcal{E}}(f)$, Assume that $\nor{f}_{\infty}\leq\nu$, for all $f\in \mathcal{F}$.
\begin{itemize}
\item For any $\alpha,\tau  >0$. Define variances $var_{\mathbb{P}}(f)$, and similarly $var_{\mathbb{Q}}(f)$.  Assume $\max(var_{\mathbb{P}}(f),var_{\mathbb{Q}}(f))\leq r$ for any $f\in \mathcal{F}$. We have with probability $1-e^{-\tau}$ : 
$$Z(S)\leq  {(1+\alpha)} E_{S}Z(S)+ \sqrt{\frac{2r\tau(M+N)}{MN}}+ \frac{2 \tau \nu(M+N)}{MN}\left(\frac{2}{3}+\frac{1}{\alpha}\right) $$
The same result holds for : 
$ Z(S)= \sup_{f\in \mathcal{F}} \hat{\mathcal{E}}(f)-\mathcal{E}(f).$
\item By symmetrization we have:
 $\mathbb{E}_{S} Z(S)\leq 2 E_{S} \mathcal{R}_{M,N}(f;\mathcal{F},S) $
where $\mathcal{R}_{M,N}$  is  the rademacher complexity:
$$\mathcal{R}_{M,N}(f;\mathcal{F},S)= E_{\sigma}\sup_{f \in \mathcal{F}} \left[ \sum_{i=1}^{N+M} \sigma_i \tilde{Y}_if(X_i)|S\right],$$
$ \tilde{Y}=(\underbrace{\frac{1}{N},\dots \frac{1}{N}}_{N},\underbrace{\frac{-1}{M}\dots \frac{-1}{M}}_{M})$, $\sigma_i=\pm 1$ with probability $\frac{1}{2}$, that are iids.
\item  We have with probability $1-e^{-\tau}$ for all $\delta \in (0,1)$:
$$E_{S} \mathcal{R}_{M,N}(f;\mathcal{F},S) \leq  \frac{\mathcal{R}_{M,N}(f;\mathcal{F},S)}{1-\delta}+\frac{\tau \nu (M+N)}{MN\delta(1-\delta)}.$$
\end{itemize}
\label{lem:localrad}
\end{lemma}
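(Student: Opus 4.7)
The plan is to prove each of the three parts separately, since each invokes a different standard tool from empirical process theory, and the main subtlety throughout is the two-sample (weighted) structure of the process.

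For the first (concentration) part, I would rewrite $Z(S)$ as the supremum of an empirical process indexed by a single index set of size $N+M$. Concretely, for $X_1,\dots,X_N \sim \mathbb{P}$ and $X_{N+1},\dots,X_{N+M} \sim \mathbb{Q}$, define $h_{f,i}(X_i) = \tilde{Y}_i f(X_i)$ with $\tilde{Y}_i = 1/N$ or $-1/M$ as in the lemma. Then $\hat{\mathcal{E}}(f) - \mathcal{E}(f) = \sum_i (h_{f,i}(X_i) - \mathbb{E} h_{f,i}(X_i))$, and by the assumption $\|f\|_\infty \leq \nu$ each summand is bounded in absolute value by $\nu(M+N)/(MN)$, while a direct computation shows that the sum of variances is at most $r(M+N)/(MN)$ under the hypothesis $\max(\mathrm{var}_{\mathbb{P}}(f),\mathrm{var}_{\mathbb{Q}}(f)) \leq r$. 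Applying Bousquet's version of Talagrand's inequality to this sum, and using the standard $ab \leq \alpha a^2/2 + b^2/(2\alpha)$ trick to split the mixed term, yields the stated bound with the $(1+\alpha)$ prefactor on $\mathbb{E}_S Z(S)$, the $\sqrt{r\tau/\cdots}$ variance term, and the $\tau\nu/\cdots$ boundedness term. The version for $\mathcal{E}(f) - \hat{\mathcal{E}}(f)$ is identical by applying the same argument to $-h_{f,i}$.

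For the symmetrization part, I would use the classical ghost-sample argument: let $S' = \{X_1',\dots,X_{N+M}'\}$ be an independent copy of $S$, so that $\mathcal{E}(f) = \mathbb{E}_{S'} \hat{\mathcal{E}}(f;S')$, and by Jensen's inequality
\begin{equation*}
\mathbb{E}_S Z(S) \leq \mathbb{E}_{S,S'} \sup_{f\in\mathcal{F}} \sum_i \tilde{Y}_i (f(X_i) - f(X_i')).
\end{equation*}
Since the distribution of $\tilde{Y}_i(f(X_i)-f(X_i'))$ is symmetric under swapping $X_i \leftrightarrow X_i'$, multiplying by Rademacher signs $\sigma_i$ does not change the law, and a triangle inequality in the supremum then yields a factor of $2$ and recovers $2\mathbb{E}_S \mathcal{R}_{M,N}(f;\mathcal{F},S)$.

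For the third part (concentrating the empirical Rademacher complexity around its mean), I would apply McDiarmid's bounded differences inequality to the map $S \mapsto \mathcal{R}_{M,N}(f;\mathcal{F},S)$. Replacing a single $X_i$ changes the internal sum by at most $2\nu|\tilde{Y}_i| \leq 2\nu(M+N)/(MN)$, giving a one-sided concentration inequality $\mathbb{E}_S \mathcal{R}_{M,N} \leq \mathcal{R}_{M,N}(S) + \nu\sqrt{2\tau(M+N)/(MN)}$ with probability $1-e^{-\tau}$. The $\tfrac{1}{1-\delta}$ and $\tfrac{1}{\delta(1-\delta)}$ factors in the claimed bound arise by invoking Young-type inequalities (or equivalently the $\alpha$-trick used in part 1) to trade off the square-root term against a fraction of $\mathcal{R}_{M,N}(S)$ and a multiple of $\tau\nu(M+N)/(MN)$.

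The main obstacle is not any deep step: these are all textbook arguments cited to \cite{IPMemp,bartlett2005}. The care required is bookkeeping the two-sample weights $\tilde{Y}_i$ so that the boundedness constant comes out as $\nu(M+N)/(MN) = \nu/n$ and the variance proxy as $r(M+N)/(MN) = r/n$, matching the statement. Once those are correctly computed, Talagrand/Bousquet, classical symmetrization, and McDiarmid each slot in directly.
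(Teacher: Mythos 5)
The paper does not actually prove this lemma; it is imported wholesale from \cite{IPMemp,bartlett2005}, so the relevant comparison is against the standard proofs in those references. Your sketches of the first two bullets follow exactly that standard route and are sound: rewriting $\hat{\mathcal{E}}(f)-\mathcal{E}(f)$ as a centered sum of the independent terms $\tilde{Y}_i f(X_i)$, checking that the envelope is $O(\nu(M+N)/(MN))$ and the total variance is at most $r(M+N)/(MN)$, invoking Bousquet's form of Talagrand's inequality, and peeling the cross term with $2\sqrt{uv}\le \alpha u + v/\alpha$ gives the first bullet; the ghost-sample symmetrization argument for the second bullet is textbook and your bookkeeping of the weights $\tilde{Y}_i$ is correct.

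The genuine gap is in the third bullet. McDiarmid's bounded-differences inequality applied to $S\mapsto \mathcal{R}_{M,N}(f;\mathcal{F},S)$, with $\sum_i c_i^2 = 4\nu^2(M+N)/(MN)$, yields a deviation term of order $\nu\sqrt{\tau(M+N)/(MN)}$, i.e. $O(\sqrt{\tau/n})$, whereas the claimed bound has an additive term $\tau\nu(M+N)/(MN\delta(1-\delta)) = O(\tau/n)$ together with a multiplicative $1/(1-\delta)$ inflation of $\mathcal{R}_{M,N}$. No Young-type inequality can convert the McDiarmid term into this form: $\nu\sqrt{2\tau/n}$ is a deterministic quantity that contains no factor of $\mathcal{R}_{M,N}$ or $\mathbb{E}_S\mathcal{R}_{M,N}$ to absorb, and splitting it as $2\sqrt{(\nu\tau/n)\cdot(\nu/2)}\le \delta\nu/2 + \nu\tau/(n\delta)$ leaves a constant additive term that does not vanish with $n$. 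The correct tool, as in \cite{bartlett2005} (their Lemma A.4 and the proof of their Theorem 2.1), is again a Bernstein-type (Bousquet/Talagrand) \emph{lower-tail} inequality applied to the conditional Rademacher average itself, exploiting its self-bounding property: the relevant variance proxy for the supremum process defining $\mathcal{R}_{M,N}$ is controlled by $b\,\mathbb{E}_S\mathcal{R}_{M,N}$ with $b = O(\nu(M+N)/(MN))$, so that the deviation $\sqrt{2\tau b\,\mathbb{E}_S\mathcal{R}_{M,N}}$ can be absorbed as $\delta\,\mathbb{E}_S\mathcal{R}_{M,N} + \tau b/(2\delta)$, which after rearranging the $(1-\delta)$ factor produces exactly the stated inequality. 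You should replace the McDiarmid step by this self-bounding concentration argument; the rest of your proposal stands.
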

\begin{lemma}[Contraction Lemma \cite{bartlett2005}] Let $\phi$ be a contraction, that is $\abs{\phi(x)-\phi(y)}\leq L \abs{x-y}$. Then, for every class $\mathcal{F}$,
$$\mathcal{R}_{M,N}(f;\phi \circ  \mathcal{F},S)\leq L \mathcal{R}_{M,N}(f;\mathcal{F},S),$$
$\phi \circ  \mathcal{F}=\{\phi \circ f, f\in \mathcal{F}\}$. 
\end{lemma}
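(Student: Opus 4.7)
The plan is to prove the contraction lemma by the standard Ledoux--Talagrand ``peeling'' argument, in which one replaces the Rademacher signs one coordinate at a time and uses the Lipschitz bound on $\phi$ at each step. Throughout, write $n = N+M$ and $c_i = \tilde{Y}_i$ (these are fixed given $S$); note that $\sigma_i$ is symmetric so $\sigma_i c_i$ has the same law as $\sigma_i |c_i|$, which lets us absorb the signs of $c_i$ into the Rademacher variables and assume $c_i \ge 0$ without loss of generality.

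First, I would reduce to a single-coordinate statement. Fix an index $i$ and condition on $\sigma_j$ for all $j \ne i$. Then
\[
\sum_{j \ne i} \sigma_j c_j \phi(f(X_j)) = a(f)
\]
is just a bounded functional of $f$, and the inner supremum becomes $\sup_f \bigl[a(f) + \sigma_i c_i \phi(f(X_i))\bigr]$. By the tower property it suffices to show, for every fixed choice of the other signs and of $S$,
\[
\mathbb{E}_{\sigma_i} \sup_{f \in \mathcal{F}} \bigl[a(f) + \sigma_i c_i \phi(f(X_i))\bigr]
\;\le\; \mathbb{E}_{\sigma_i} \sup_{f \in \mathcal{F}} \bigl[a(f) + L \sigma_i c_i f(X_i)\bigr],
\]
since iterating over $i=1,\dots,n$ then yields the lemma.

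Next I would expand the expectation over the single Rademacher $\sigma_i \in \{\pm 1\}$. Writing
\[
2\,\mathbb{E}_{\sigma_i} \sup_f \bigl[a(f) + \sigma_i c_i \phi(f(X_i))\bigr]
= \sup_{f_1,f_2 \in \mathcal{F}} \Bigl\{ a(f_1) + a(f_2) + c_i\bigl[\phi(f_1(X_i)) - \phi(f_2(X_i))\bigr] \Bigr\},
\]
and similarly for the right-hand side with $L\,f_k(X_i)$ in place of $\phi(f_k(X_i))$, the problem reduces to a pointwise inequality in the pair $(f_1,f_2)$. Using $c_i \ge 0$ and the Lipschitz assumption,
\[
c_i \bigl[\phi(f_1(X_i)) - \phi(f_2(X_i))\bigr] \;\le\; c_i \bigl|\phi(f_1(X_i)) - \phi(f_2(X_i))\bigr| \;\le\; L c_i \bigl|f_1(X_i) - f_2(X_i)\bigr|.
\]
Splitting by the sign of $f_1(X_i) - f_2(X_i)$ (and using symmetry of $\mathcal{F}$ under swapping $f_1 \leftrightarrow f_2$ in the supremum, which is legitimate because the supremand is symmetric in $(f_1,f_2)$ after the swap), the right-hand side is dominated by $\sup_{f_1,f_2}\{a(f_1)+a(f_2) + L c_i [f_1(X_i) - f_2(X_i)]\}$, which is exactly $2\,\mathbb{E}_{\sigma_i}\sup_f[a(f)+L\sigma_i c_i f(X_i)]$.

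The main obstacle is the usual subtlety in the single-coordinate step: the symmetrization identity only pairs up $f_1,f_2$ after the expansion, and one must be careful that the inequality $c_i[\phi(f_1) - \phi(f_2)] \le L c_i |f_1 - f_2|$ is then matched to $L c_i (f_1 - f_2)$ \emph{on the dominant sign}, which requires the observation that $(f_1,f_2) \mapsto a(f_1)+a(f_2)$ is symmetric, so we may always assume $f_1(X_i) \ge f_2(X_i)$ in the supremum. Once that case analysis is handled cleanly for one coordinate, the final bound follows by induction on the $n$ coordinates and then by taking expectation over $S$ on both sides, giving $\mathcal{R}_{M,N}(f;\phi\circ\mathcal{F},S) \le L\,\mathcal{R}_{M,N}(f;\mathcal{F},S)$ as claimed.
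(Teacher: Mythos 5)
The paper does not prove this lemma at all: it is stated with a citation to Bartlett, Bousquet and Mendelson and used as a black box in the proof of Theorem 3, so there is no in-paper argument to compare against. Your proof is the standard Ledoux--Talagrand single-coordinate contraction argument, which is essentially the proof in the cited literature, and it is correct: absorbing the signs of $\tilde{Y}_i$ into the Rademacher variables is legitimate by symmetry of $\sigma_i$; the expansion of $2\,\mathbb{E}_{\sigma_i}\sup_f[\cdot]$ into a supremum over pairs $(f_1,f_2)$ is exact; and the removal of the absolute value by exchanging $f_1$ and $f_2$ works because the remainder $a(f_1)+a(f_2)$ is symmetric in the pair (note this uses no symmetry assumption on $\mathcal{F}$ itself, consistent with the lemma's ``for every class $\mathcal{F}$''). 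It is also worth noting that because the complexity here has no absolute value outside the sum, your argument needs no normalization such as $\phi(0)=0$ and yields the constant $L$ rather than $2L$. Two cosmetic points: in the induction the functional $a(f)$ is a mix of already-contracted terms $L\sigma_j c_j f(X_j)$ and not-yet-contracted terms $\sigma_j c_j\phi(f(X_j))$, so the single-coordinate step should be stated for an arbitrary functional $a$ (which your phrasing ``just a bounded functional of $f$'' already accommodates); and the final expectation over $S$ is unnecessary, since $\mathcal{R}_{M,N}(\cdot\,;\cdot,S)$ is defined conditionally on $S$ and the claimed inequality is the conditional one.
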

Let $n=\frac{MN}{M+N}$.
Applying Lemma \ref{lem:localrad} for $\mathcal{F}=\{f\in \mathcal{H},\Omega(f)\leq 1\}$.
Since $\Omega(f)\leq 1$, $var_{\mathbb{P}}(f)\leq \Omega(f)\leq1$, and similarly for $var_{\mathbb{Q}}(f)$. Hence $\max(var_{\mathbb{P}}(f),var_{\mathbb{Q}}(f))\leq 1$.
Putting all together we obtain with probability $1-2e^{-\tau}$:
\begin{align}
Z_1(S)&\leq  \frac{2(1+\alpha)}{1-\delta}\mathcal{R}_{M,N}(f;\{f \in \mathcal{H}, \Omega(f) \leq 1\},S) + \sqrt{\frac{2\tau}{n}}
+ \frac{2\tau \nu}{n}\left(\frac{2}{3}+\frac{1}{\alpha}+\frac{1+\alpha}{\delta(1-\delta)}\right)
\label{eq:Z1}
\end{align}
Now tuning to $Z_3(S)$ applying Lemma \ref{lem:localrad} for $\{f^2, f\in \mathcal{H},\Omega(f)\leq 1\}$. Note that $Var(f^2)\leq \mathbb{E}f^4 \leq \Omega(f)\nu^2 \leq \nu^2$. We have that for $\alpha>0$, $\delta \in (0,1)$ and with probability at least $1-2e^{-\tau}$:
\begin{align*}
Z_3(S)&\leq \frac{2(1+\alpha)}{1-\delta}{\mathcal{R}_{N,M}(f^2;\{ f\in \mathcal{H},\Omega(f)\leq 1\},S)}+ \sqrt{\frac{2\tau\nu^2}{n}}+ \frac{2\tau \nu^2}{n}\left(\frac{2}{3}+\frac{1}{\alpha}+ \frac{1+\alpha}{\delta(1-\delta)}\right) 
\end{align*}
Note that applying the contraction Lemma for $\phi(x)=x^2$ (with lipchitz constant $2\nu$ on $[-\nu,\nu]$) we have: 
$$\mathcal{R}_{N,M}(f^2; \{ f\in \mathcal{H},\Omega(f)\leq 1\},S)\leq 2\nu \mathcal{R}_{N,M}(f;\{f\in \mathcal{H},\Omega(f)\leq 1\},S),$$
Hence we have finally:
\begin{align}
Z_3(S)&\leq \frac{4(1+\alpha)\nu}{1-\delta}{\mathcal{R}_{N,M}(f; \{f\in \mathcal{H},\Omega(f)\leq 1\},S)}+ \sqrt{\frac{2\tau\nu^2}{n}}
+ \frac{2\tau \nu^2}{n}\left(\frac{2}{3}+\frac{1}{\alpha}+ \frac{1+\alpha}{\delta(1-\delta)}\right) 
\label{eq:Z3}
\end{align}
Note that the  of complexity of $\mathcal{H}$, depends also upon the distributions $\mathbb{P}$ and $\mathbb{Q}$, since it is defined on the intersection of $\mathcal{H}$ and the unity  ball in $\mathcal{L}_{2}(\pazocal{X},\frac{\mathbb{P}+\mathbb{Q}}{2})$. 

\textbf{From Distributions to Data dependent Bounds.}  
We study how the $\hat{\Omega}(f)$ concentrates uniformly on $\mathcal{H}$.
Note that in this case to apply Lemma \ref{lem:localrad}, we use $r\leq\mathbb{E}(f^4)\leq \nu^4$.  We have with probability $1-2e^{-\tau}$:
$$\hat{\Omega}(f)\leq \Omega(f)+ \frac{4(1+\alpha)\nu}{1-\delta}{\mathcal{R}_{N,M}(f; f\in \mathcal{H},S) }+ \sqrt{\frac{2\tau\nu^4  r}{n}}
+ \frac{2\tau \nu^2}{n}\left(\frac{2}{3}+\frac{1}{\alpha}+ \frac{1+\alpha}{\delta(1-\delta)}\right) 
$$
Now using that for any $\alpha>0$: 
$2\sqrt{uv}\leq \alpha u+\frac{v}{\alpha}$
we have for $\alpha=\frac{1}{2}$:
$\sqrt{\frac{2\tau \nu^4}{n}}\leq \frac{\nu^2}{2} + \frac{4\tau\nu^2}{n}.$
For some universal constants, $c_1,c_2$, let:
$$\eta_n \geq c_1 \nu {\mathcal{R}_{N,M}(f; f\in \mathcal{H},S) }+c_2\frac{\nu^2\tau}{n},$$
we have therefore with probability $1-2e^{-\tau}$:
\begin{equation}
\hat{\Omega}(f)\leq \Omega(f)+\frac{\nu^2}{2}+\eta_n,
\label{eq:unifconcentration1}
\end{equation}
note that Typically $\eta_n=O(\frac{1}{\sqrt{n}})$.\\
The same inequality holds with the same probability:
\begin{equation}
\Omega(f)\leq \hat{\Omega}(f)+\frac{\nu^2}{2}+\eta_n,
\label{eq:unifconcentration2}
\end{equation}
Note that we have now the following inclusion using Equation \eqref{eq:unifconcentration1}: 
$$\{f,f\in \mathcal{H}, \Omega(f) \leq 1 \} \subset \left\{f, f\in \mathcal{H}, \hat{\Omega}(f)\leq 1+\frac{\nu^2}{2}+\eta_n \right  \} $$
Hence:
$$\mathcal{R}_{M,N}(f;\{f\in \mathcal{H}, \Omega(f) \leq 1\},S) \leq \mathcal{R}_{M,N}(f;\{f \in \mathcal{H},  \hat{\Omega}(f)\leq 1+\frac{\nu^2}{2}+\eta_n\},S) $$
Hence we obtain a data dependent bound in Equations \eqref{eq:Z1},\eqref{eq:Z3} with a union bound with probability $1-6e^{-\tau}$.

\textbf{Bounding $Z_2(S)$.}  
Note that concentration inequalities don't apply to $Z_2(S)$ since the cost function and the function class are data dependent. We need to turn the constraint to a data independent constraint i.e does not depend on the training set. 
For $f,\hat{\Omega}(f)\leq 1$, by Equation \eqref{eq:unifconcentration2} we have with probability $1-2e^{-\tau}$:
$$\Omega(f)\leq 1+\frac{\nu^2}{2}+\eta_n,$$
we have therefore the following inclusion with probability $1-2e^{-\tau}$:
$$\{f\in \mathcal{H},\hat{\Omega}(f)\leq 1\}\subset \{f \in \mathcal{H},\Omega(f)\leq 1+\frac{\nu^2}{2}+\eta_n\}$$
Recall that:
$$Z_2(S) =\sup_{f\in \mathcal{H}, \hat{\Omega}(f)\leq 1}  \abs{\mathcal{\hat{E}}({f})-\mathcal{E}({f}) }$$
Hence with probability $1-2e^{-\tau}$:
 $$Z_2(S)\leq \tilde{Z}_2(S)= \sup_{f,f\in \mathcal{H},\Omega(f)\leq 1+\frac{\nu^2}{2}+\eta_n}  \abs{\mathcal{\hat{E}}({f})-\mathcal{E}({f}) } $$

Applying again Lemma \ref{lem:localrad}  on $\tilde{Z}_2(S)$  we have with probability $1-4e^{-\tau}$:
\begin{align*}
Z_2(S)\leq \tilde{Z}_2(S)&\leq  \frac{2(1+\alpha)}{1-\delta}\mathcal{R}_{M,N}(f;\{ f\in \mathcal{H}, \Omega(f) \leq 1+\frac{\nu^2}{2}+\eta_n\},S) + \sqrt{\frac{2\tau(1+\frac{\nu^2}{2}+\eta_n)}{n}}\\
&+ \frac{2\tau \nu}{n}\left(\frac{2}{3}+\frac{1}{\alpha}+\frac{1+\alpha}{\delta(1-\delta)}\right).
\end{align*}
Now reapplying the inclusion using Equation \eqref{eq:unifconcentration1}, we get the following bound on the local rademacher complexity  with probability $1-2e^{-\tau}$:
$$\mathcal{R}_{M,N}(f;\{f \in \mathcal{H}, \Omega(f) \leq 1+\frac{\nu^2}{2}+\eta_n\},S) \leq \mathcal{R}_{M,N}(f;\{f \in \mathcal{H}, \hat{\Omega}(f) \leq 1+\nu^2+2\eta_n\},S)$$
Hence with probability $1-6e^{-\tau}$ we have:
\begin{align*}
Z_2(S)&\leq  \frac{2(1+\alpha)}{1-\delta}\ \mathcal{R}_{M,N}(f;\{f \in \mathcal{H}, \hat{\Omega}(f) \leq 1+\nu^2+2\eta_n\},S) + \sqrt{\frac{2\tau(1+\frac{\nu^2}{2}+\eta_n)}{n}}\\
&+ \frac{2\tau \nu}{n}\left(\frac{2}{3}+\frac{1}{\alpha}+\frac{1+\alpha}{\delta(1-\delta)}\right).
\end{align*}
%

%

\textbf{Putting all together.} We have with probability at least $1-12e^{-\tau}$, for universal constants $c_1,c_2,c_3,c_4$
$$\eta_n \geq c_1 \nu {\mathcal{R}_{N,M}(f; f\in \mathcal{H},S) }+c_2\frac{\nu^2\tau}{n},$$

\begin{align*}
\mathcal{E}(f^*)-\mathcal{E}(\hat{f})&\leq {Z_1(S)}+{Z_2(S)}+\hat{\lambda} Z_3(S)\\
&\leq \varepsilon_n\\
&= c_3  \mathcal{R}_{M,N}(f;\{f\in \mathcal{H}, \hat{\Omega}(f) \leq 1+\nu^2+2\eta_n\},S)\\
&+ c_4(1+2\nu \hat{\lambda}) \mathcal{R}_{M,N}(f;\{f \in \mathcal{H},  \hat{\Omega}(f)\leq 1+\frac{\nu^2}{2}+\eta_n\},S) \\
&+O(\frac{1}{\sqrt{n}}).
\end{align*}
Note that typically $\varepsilon_n= O(\frac{1}{\sqrt{n}})$.
Hence it follows that:
\begin{equation}
\frac{\chi_2(\mathbb{P},\mathbb{Q})-\hat{d}_{\mathcal{H}}(\mathbb{P},\mathbb{Q})}{\chi_2(\mathbb{P},\mathbb{Q})}\leq \underbrace{\frac{1}{2} \inf_{f\in \mathcal{H}\cap ~ \mathbb{S}_{\mathcal{L}_2(\pazocal{X},\frac{\mathbb{P}+\mathbb{Q}}{2})}} \nor{f-f_{\chi}}^2_{\mathcal{L}_2(\pazocal{X},\frac{\mathbb{P}+\mathbb{Q}}{2})}}_{\text{ approximation error }}+\underbrace{\frac{\varepsilon_n }{\chi_2(\mathbb{P},\mathbb{Q})}}_{\text{Statistical Error}}.
\end{equation}
If $\mathbb{P}$ and $\mathbb{Q}$ are close we need more samples to estimate the $\chi_2$ distance and reduce the relative error.

\textbf{Example: Bounding local complexity for a simple linear function class.}
$$\mathcal{H}=\{f(x)=\scalT{v}{\Phi(x)}, v\in \mathbb{R}^{m} \}$$ Note that for simplicity here we assume that the feature map is fixed $\Phi: \pazocal{X}\to \mathbb{R}^m$, and we parametrize the class function only with $v$. Note that $\sup_{v, v^{\top}(\Sigma(\mathbb{P}_{N})+\Sigma(\mathbb{Q}_{M})+\gamma I_m)v\leq 2R }  \scalT{v}{\sum_{i=1}^N \sigma_i\tilde{Y}_i \Phi(X_i)}$
\begin{align*}
&= \sup_{v,\nor{v}\leq 1} \scalT{v }{\left(\frac{\Sigma(\mathbb{P}_{N})+\Sigma(\mathbb{Q}_{M})+\gamma I_m}{2R}\right)^{-\frac{1}{2}}\sum_{i=1}^N \sigma_i\tilde{Y}_i \Phi(X_i) }\\
&= \nor{\left(\frac{\Sigma(\mathbb{P}_{N})+\Sigma(\mathbb{Q}_{M})+\gamma I_m}{2R}\right)^{-\frac{1}{2}}\sum_{i=1}^{N+M} \sigma_i\tilde{Y}_i \Phi(X_i) }\\
&= \sqrt{2R} \sqrt{\sum_{i,j=1}^{N+M} \sigma_i \sigma_j \tilde{Y}_i \tilde{Y}_{j}\Phi(X_i)^{\top}\left({\Sigma(\mathbb{P}_{N})+\Sigma(\mathbb{Q}_{M})+\gamma I_m}\right)^{-1}\Phi(X_j)}
\end{align*}
It follows by Jensen inequality that $  \mathbb{E}_{\sigma} \sup_{v, v^{\top}(\Sigma(\mathbb{P}_{N})+\Sigma(\mathbb{Q}_{M})+\gamma I_m)v \leq \sqrt{2R} } \scalT{v}{\sum_{i=1}^N \sigma_i\tilde{Y}_i \Phi(X_i)}$

\begin{align*}
 &\leq\sqrt{2R} \sqrt{\mathbb{E}_{\sigma} \sum_{i,j=1}^{N+M} \sigma_i \sigma_j \tilde{Y}_i\tilde{Y}_j \Phi(X_i)^{\top}\left({\Sigma(\mathbb{P}_{N})+\Sigma(\mathbb{Q}_{M})}+\gamma I_m\right)^{-1}\Phi(X_j)} \\
 &= \sqrt{2R} \sqrt{\sum_{i=1}^{N+M} \tilde{Y}^2_i \Phi(X_i)^{\top}\left({\Sigma(\mathbb{P}_{N})+\Sigma(\mathbb{Q}_{M})+\gamma I_m}\right)^{-1}\Phi(X_i)}\\
 &= \sqrt{2R}\sqrt{Tr\left(\left(\Sigma(\mathbb{P}_{N})+\Sigma(\mathbb{Q}_{M})+\gamma I_m\right)^{-1} \left(\frac{1}{N} \Sigma(\mathbb{P}_{N})+\frac{1}{M} \Sigma(\mathbb{Q}_{M}) \right)\right)}\\
 &\leq \sqrt{2R \frac{M+N}{MN}} \sqrt{Tr\left(\left(\Sigma(\mathbb{P}_{N})+\Sigma(\mathbb{Q}_{M})+\gamma I_m\right)^{-1} \left( \Sigma(\mathbb{P}_{N})+ \Sigma(\mathbb{Q}_{M}) \right)\right)}
 \end{align*}
 Let $$d(\gamma)= Tr\left(\left(\Sigma(\mathbb{P}_{N})+\Sigma(\mathbb{Q}_{M})+\gamma I_m\right)^{-1} \left( \Sigma(\mathbb{P}_{N})+ \Sigma(\mathbb{Q}_{M}) \right)\right),$$
 $d(\gamma)$ is the so called effective dimension in regression problems.
 Let $\Sigma$ be the singular values of $\Sigma(\mathbb{P}_{N})+\Sigma(\mathbb{Q}_{M})$, 
 $$d(\gamma)=\sum_{j=1}^m \frac{\sigma^2_j}{\sigma^2_j+\gamma}$$
 
 Hence we obtain the following bound on the local rademacher complexity:
 $$\mathcal{R}_{M,N}(f;\{\mathcal{H}, \hat{\Omega}(f) \leq R\},S)) \leq \sqrt{2R \frac{(M+N) d(\gamma)}{MN}}= \sqrt{2R \frac{ d(\gamma)}{n}}$$
 Note that without the local constraint the effective dimension $d(\gamma)$ (typically $d(\gamma)<< m$) is replaced by the ambient dimension $m$.
\end{proof}

\section{Hyper-parameters and Architectures of Discriminator and Generators}
\label{sec:hypers}
For CIFAR-10 we use adam learning rate $\eta=2\mathrm{e}{-4}$, $\beta_1=0.5$ and $\beta_2=0.999$, and penalty weight $\rho=3\mathrm{e}{-7}$,
for LSUN and CelebA we use $\eta=5\mathrm{e}{-4}$, $\beta_1=0.5$ and $\beta_2=0.999$, and $\rho=1\mathrm{e}{-6}$.
We found the optimization to be stable with very similar performance in the range $\eta \in \left[1\mathrm{e}{-4} , 1\mathrm{e}{-3}\right]$ and 
$\rho \in \left[1\mathrm{e}{-7} , 1\mathrm{e}{-5}\right]$ across our experiments.
We found weight initialization from a normal distribution with stdev=0.02 to perform better than
Glorot \cite{glorot2010understanding} or He \cite{he2015delving} initialization for both Fisher GAN and WGAN-GP.
This initialization is the default in pytorch, while in the WGAN-GP codebase He init \cite{he2015delving} is used.
Specifically the initialization of the generator is more important.

We used some L2 weight decay: $1\mathrm{e}{-6}$ on $\omega$ (i.e. all layers except last) and $1\mathrm{e}{-3}$ weight decay on the last layer $v$.

\subsection{Inception score WGAN-GP baselines: comparison of architecture and weight initialization}
\label{appendix:inception}
As noted in Figure \ref{fig:inception} and in above paragraph, we used intialization from a normal distribution with stdev=0.02
for the inception score experiments for both Fisher GAN and WGAN-GP.
For transparency, and to show that our architecture and initialization benefits both Fisher GAN and WGAN-GP,
we provide plots of different combinations below (Figure \ref{fig:inception_wgangp}). 
Architecture-wise, F64 refers to the architecture described in Appendix \ref{sec:mdl2} with 64 feature maps after the first convolutional layer.
F128 is the architecture from the WGAN-GP codebase \cite{gulrajani2017improved}, which has double the number of feature maps (128 fmaps)
and does not have the two extra layers in G and D (D layers 2-7, G layers 9-14).
The result reported in the WGAN-GP paper \cite{gulrajani2017improved} corresponds to \verb$WGAN-GP F128 He init$.
For WGAN (Figure \ref{fig:inception_wgan}) the 64-fmap architecture gives some initial instability but catches up to the same level as the 128-fmap architecture.

\begin{figure}[H]
\centering
  \includegraphics[width=0.70\textwidth,valign=t]{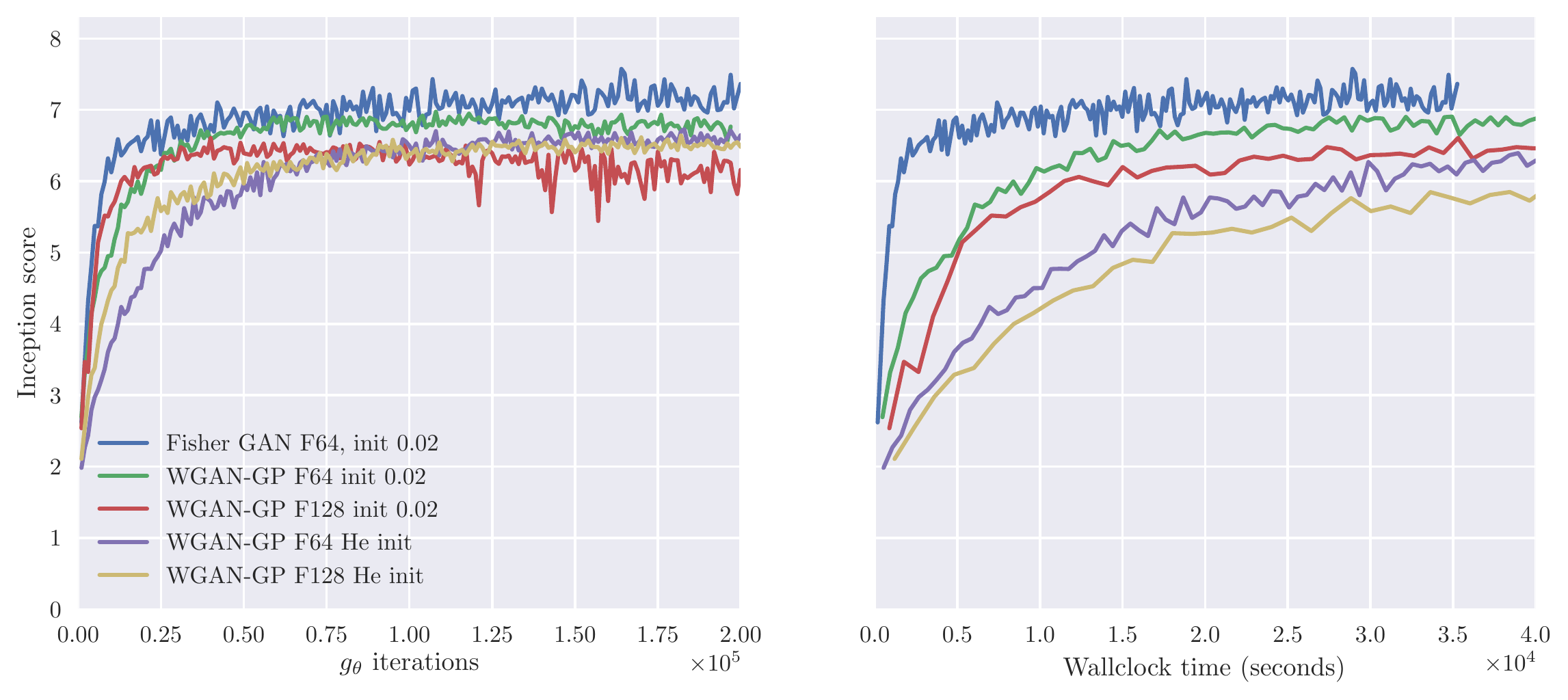}
  \caption{Architecture and initialization variations, trained with WGAN-GP. Fisher included for comparison.
    In the main text (Figure \ref{fig:inception}) we only compare against the best architecture F64 init 0.02.}
  \label{fig:inception_wgangp}
\end{figure}

\begin{figure}[H]
\centering
  \includegraphics[width=0.70\textwidth,valign=t]{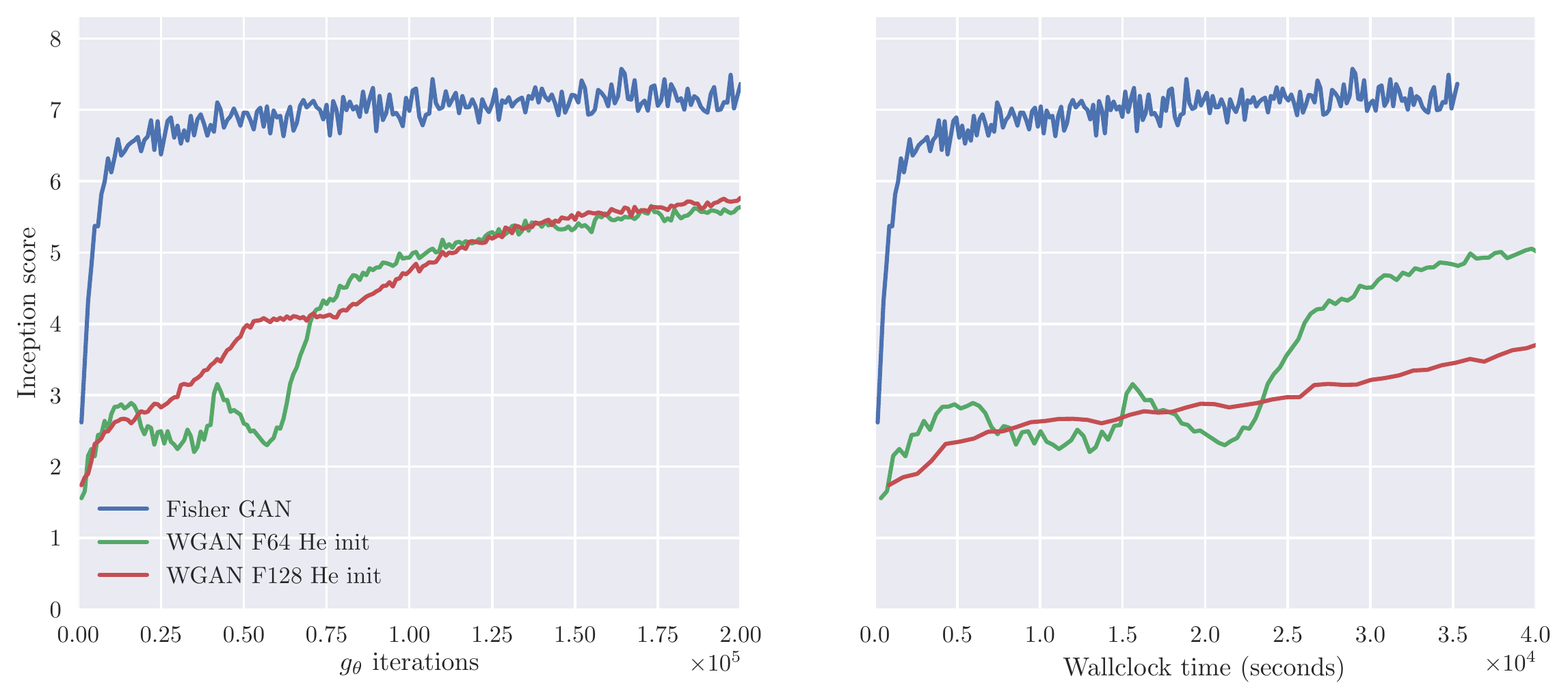}
  \caption{Architecture variations, trained with WGAN. Fisher included for comparison.}
  \label{fig:inception_wgan}
\end{figure}

\subsection{LSUN and CelebA.}
\label{sec:mdl1}
\begin{lstlisting}
### LSUN and CelebA: 64x64 dcgan with G_extra_layers=2 and D_extra_layers=0
G (
  (main): Sequential (
    (0): ConvTranspose2d(100, 512, kernel_size=(4, 4), stride=(1, 1), bias=False)
    (1): BatchNorm2d(512, eps=1e-05, momentum=0.1, affine=True)
    (2): ReLU (inplace)
    (3): ConvTranspose2d(512, 256, kernel_size=(4, 4), stride=(2, 2), padding=(1, 1), bias=False)
    (4): BatchNorm2d(256, eps=1e-05, momentum=0.1, affine=True)
    (5): ReLU (inplace)
    (6): ConvTranspose2d(256, 128, kernel_size=(4, 4), stride=(2, 2), padding=(1, 1), bias=False)
    (7): BatchNorm2d(128, eps=1e-05, momentum=0.1, affine=True)
    (8): ReLU (inplace)
    (9): ConvTranspose2d(128, 64, kernel_size=(4, 4), stride=(2, 2), padding=(1, 1), bias=False)
    (10): BatchNorm2d(64, eps=1e-05, momentum=0.1, affine=True)
    (11): ReLU (inplace)
    (12): Conv2d(64, 64, kernel_size=(3, 3), stride=(1, 1), padding=(1, 1), bias=False)
    (13): BatchNorm2d(64, eps=1e-05, momentum=0.1, affine=True)
    (14): ReLU (inplace)
    (15): Conv2d(64, 64, kernel_size=(3, 3), stride=(1, 1), padding=(1, 1), bias=False)
    (16): BatchNorm2d(64, eps=1e-05, momentum=0.1, affine=True)
    (17): ReLU (inplace)
    (18): ConvTranspose2d(64, 3, kernel_size=(4, 4), stride=(2, 2), padding=(1, 1), bias=False)
    (19): Tanh ()
  )
)
D (
  (main): Sequential (
    (0): Conv2d(3, 64, kernel_size=(4, 4), stride=(2, 2), padding=(1, 1), bias=False)
    (1): LeakyReLU (0.2, inplace)
    (2): Conv2d(64, 128, kernel_size=(4, 4), stride=(2, 2), padding=(1, 1), bias=False)
    (3): BatchNorm2d(128, eps=1e-05, momentum=0.1, affine=True)
    (4): LeakyReLU (0.2, inplace)
    (5): Conv2d(128, 256, kernel_size=(4, 4), stride=(2, 2), padding=(1, 1), bias=False)
    (6): BatchNorm2d(256, eps=1e-05, momentum=0.1, affine=True)
    (7): LeakyReLU (0.2, inplace)
    (8): Conv2d(256, 512, kernel_size=(4, 4), stride=(2, 2), padding=(1, 1), bias=False)
    (9): BatchNorm2d(512, eps=1e-05, momentum=0.1, affine=True)
    (10): LeakyReLU (0.2, inplace)
  )
  (V): Linear (8192 -> 1)
)

\end{lstlisting}
\subsection{CIFAR-10: Sample Quality and Inceptions Scores Experiments}
\label{sec:mdl2}
\begin{lstlisting}
### CIFAR-10: 32x32 dcgan with G_extra_layers=2 and D_extra_layers=2. For samples and inception.
G (
  (main): Sequential (
    (0): ConvTranspose2d(100, 256, kernel_size=(4, 4), stride=(1, 1), bias=False)
    (1): BatchNorm2d(256, eps=1e-05, momentum=0.1, affine=True)
    (2): ReLU (inplace)
    (3): ConvTranspose2d(256, 128, kernel_size=(4, 4), stride=(2, 2), padding=(1, 1), bias=False)
    (4): BatchNorm2d(128, eps=1e-05, momentum=0.1, affine=True)
    (5): ReLU (inplace)
    (6): ConvTranspose2d(128, 64, kernel_size=(4, 4), stride=(2, 2), padding=(1, 1), bias=False)
    (7): BatchNorm2d(64, eps=1e-05, momentum=0.1, affine=True)
    (8): ReLU (inplace)
    (9): Conv2d(64, 64, kernel_size=(3, 3), stride=(1, 1), padding=(1, 1), bias=False)
    (10): BatchNorm2d(64, eps=1e-05, momentum=0.1, affine=True)
    (11): ReLU (inplace)
    (12): Conv2d(64, 64, kernel_size=(3, 3), stride=(1, 1), padding=(1, 1), bias=False)
    (13): BatchNorm2d(64, eps=1e-05, momentum=0.1, affine=True)
    (14): ReLU (inplace)
    (15): ConvTranspose2d(64, 3, kernel_size=(4, 4), stride=(2, 2), padding=(1, 1), bias=False)
    (16): Tanh ()
  )
)
D (
  (main): Sequential (
    (0): Conv2d(3, 64, kernel_size=(4, 4), stride=(2, 2), padding=(1, 1), bias=False)
    (1): LeakyReLU (0.2, inplace)
    (2): Conv2d(64, 64, kernel_size=(3, 3), stride=(1, 1), padding=(1, 1), bias=False)
    (3): BatchNorm2d(64, eps=1e-05, momentum=0.1, affine=True)
    (4): LeakyReLU (0.2, inplace)
    (5): Conv2d(64, 64, kernel_size=(3, 3), stride=(1, 1), padding=(1, 1), bias=False)
    (6): BatchNorm2d(64, eps=1e-05, momentum=0.1, affine=True)
    (7): LeakyReLU (0.2, inplace)
    (8): Conv2d(64, 128, kernel_size=(4, 4), stride=(2, 2), padding=(1, 1), bias=False)
    (9): BatchNorm2d(128, eps=1e-05, momentum=0.1, affine=True)
    (10): LeakyReLU (0.2, inplace)
    (11): Conv2d(128, 256, kernel_size=(4, 4), stride=(2, 2), padding=(1, 1), bias=False)
    (12): BatchNorm2d(256, eps=1e-05, momentum=0.1, affine=True)
    (13): LeakyReLU (0.2, inplace)
  )
  (V): Linear (4096 -> 1)
  (S): Linear (6144 -> 10)
)


\end{lstlisting}

\subsection{CIFAR-10: SSL Experiments}
\label{sec:mdl3}
\begin{lstlisting}
### CIFAR-10: 32x32 D is in the flavor OpenAI Improved GAN, ALI. 
G same as above.

D (
  (main): Sequential (
    (0): Dropout (p = 0.2)
    (1): Conv2d(3, 96, kernel_size=(3, 3), stride=(1, 1), padding=(1, 1))
    (2): LeakyReLU (0.2, inplace)
    (3): Conv2d(96, 96, kernel_size=(3, 3), stride=(1, 1), padding=(1, 1), bias=False)
    (4): BatchNorm2d(96, eps=1e-05, momentum=0.1, affine=True)
    (5): LeakyReLU (0.2, inplace)
    (6): Conv2d(96, 96, kernel_size=(3, 3), stride=(2, 2), padding=(1, 1), bias=False)
    (7): BatchNorm2d(96, eps=1e-05, momentum=0.1, affine=True)
    (8): LeakyReLU (0.2, inplace)
    (9): Dropout (p = 0.5)
    (10): Conv2d(96, 192, kernel_size=(3, 3), stride=(1, 1), padding=(1, 1), bias=False)
    (11): BatchNorm2d(192, eps=1e-05, momentum=0.1, affine=True)
    (12): LeakyReLU (0.2, inplace)
    (13): Conv2d(192, 192, kernel_size=(3, 3), stride=(1, 1), padding=(1, 1), bias=False)
    (14): BatchNorm2d(192, eps=1e-05, momentum=0.1, affine=True)
    (15): LeakyReLU (0.2, inplace)
    (16): Conv2d(192, 192, kernel_size=(3, 3), stride=(2, 2), padding=(1, 1), bias=False)
    (17): BatchNorm2d(192, eps=1e-05, momentum=0.1, affine=True)
    (18): LeakyReLU (0.2, inplace)
    (19): Dropout (p = 0.5)
    (20): Conv2d(192, 384, kernel_size=(3, 3), stride=(1, 1), bias=False)
    (21): BatchNorm2d(384, eps=1e-05, momentum=0.1, affine=True)
    (22): LeakyReLU (0.2, inplace)
    (23): Dropout (p = 0.5)
    (24): Conv2d(384, 384, kernel_size=(3, 3), stride=(1, 1), bias=False)
    (25): BatchNorm2d(384, eps=1e-05, momentum=0.1, affine=True)
    (26): LeakyReLU (0.2, inplace)
    (27): Dropout (p = 0.5)
    (28): Conv2d(384, 384, kernel_size=(1, 1), stride=(1, 1), bias=False)
    (29): BatchNorm2d(384, eps=1e-05, momentum=0.1, affine=True)
    (30): LeakyReLU (0.2, inplace)
    (31): Dropout (p = 0.5)
  )
  (V): Linear (6144 -> 1)
  (S): Linear (6144 -> 10)
)
\end{lstlisting}

\section{Sample implementation in PyTorch}
This minimalistic sample code is based on
\url{https://github.com/martinarjovsky/WassersteinGAN} at commit d92c503.

Some elements that could be added are:
\bit
\it Validation loop
\it Monitoring of weights and activations
\it Separate weight decay for last layer $v$ (we trained with $1\mathrm{e}{-3}$ weight decay on $v$).
\it Adding Cross-Entropy objective and class-conditioned generator.
\eit

\subsection{Main loop}
First note the essential change in the critic's forward pass definition:
\lstinputlisting[basicstyle=\tiny\ttfamily,language=Python,firstline=183,lastline=185]{code/diff.txt}

Then the main training loop becomes:

\lstinputlisting[basicstyle=\tiny\ttfamily,language=Python,firstline=155,lastline=226]{code/main.py}

\subsection{Full diff from reference}
Note that from the arXiv \LaTeX \, source, the file \verb$diff.txt$ could be used in combination with \verb$git apply$.

\lstinputlisting[basicstyle=\tiny\ttfamily,language=Python]{code/diff.txt}

\end{document}